\newtheorem{thm}{Theorem}
\newtheorem{prop}{Proposition}
\newtheorem{ex}{Example}
\newtheorem{df}{Definition}
\theoremstyle{remark}
\newcommand{\RR}{\mathbb{R}} % 実数体
\newcommand{\ul}[1]{\underline{#1}} % 下線
\newcommand{\ol}[1]{\overline{#1}} % オーバーライン
\newcommand{\wt}[1]{\widetilde{#1}} % widetilde
\newcommand{\wh}[1]{\widehat{#1}} % widetilde
\newcommand{\Ch}{{\rm Ch}}
\newcommand{\ReLU}{\mathrm{ReLU}\ \! } % Spec
\newcommand{\ca}[1]{\mathcal{#1}} % mathcal 
\def\inprod<#1>{\left\langle #1 \right\rangle} % innerproduct
\theoremstyle{definition}
\numberwithin{equation}{section}
\begin{document}

% If your paper is accepted and the title of your paper is very long,
% the style will print as headings an error message. Use the following
% command to supply a shorter title of your paper so that it can be
% used as headings.
%
\runningtitle{On the Number of Linear Functions Composing Deep Neural Network}

% If your paper is accepted and the number of authors is large, the
% style will print as headings an error message. Use the following
% command to supply a shorter version of the authors names so that
% they can be used as headings (for example, use only the surnames)
%
%\runningauthor{Surname 1, Surname 2, Surname 3, ...., Surname n}

\twocolumn[
\aistatstitle{On the Number of Linear Functions Composing Deep Neural Network: \\ 
Towards a Refined Definition of Neural Networks Complexity}

\aistatsauthor{Yuuki Takai \And Akiyoshi Sannai \And  Matthieu Cordonnier }

\aistatsaddress{
RIKEN AIP \\ 
\url{yuuki.takai@riken.jp} \And RIKEN AIP \\ 
\url{akiyoshi.sannai@riken.jp} \And \'{E}cole Normale \\ Sup\'{e}rieure Paris-Saclay \\ 
%\hspace{-30pt} \url{matthieu.cordonnier}\\ \hspace{30pt}\url{@ens-paris-saclay.fr} 
 \url{matthieu.cordonnier}\\ \url{@ens-paris-saclay.fr} 
} ]

\begin{abstract}

The classical approach to measure the expressive power of deep neural networks with piecewise linear activations is based on counting their maximum number of linear regions.
This complexity measure is quite relevant to understand general properties of the expressivity of neural networks such as the benefit of depth over width. Nevertheless, it appears limited when it comes to comparing the expressivity of different network architectures. This lack becomes particularly prominent when considering permutation-invariant networks, 
due to the symmetrical redundancy among the linear regions.
To tackle this, we propose a refined definition of piecewise linear function complexity: instead of counting the number of linear regions directly, we first introduce an equivalence relation among the linear functions composing a piecewise linear function and then count those linear functions relative to that equivalence relation. 
Our new complexity measure can clearly distinguish between the two aforementioned models, is consistent with the classical measure, and increases exponentially with depth.
\end{abstract}

\section{Introduction}

Deep neural networks with rectified linear units (ReLU) as an activation function have been remarkably successful in computer vision, speech recognition, and other domains \citep{krizhevsky2012imagenet}, \citep{goodfellow2013multi}, \citep{wan2013regularization}, 
 \citep{silver2017mastering}. %ReLU possesses many advantageous properties for computational cost, vanishing gradient problem, gradient descent, and so on.
However, the theoretical understanding to support this experimental progress is still insufficient, thereby motivating several researchers to bridge this crucial gap.

A fundamental theoretical problem is {\em the expressivity of neural networks}: given an architecture configuration (depth, width, layer type, activation function), which class of functions can neural networks compute and with what level of performance? 
%A well known related topic is the understanding of the empirically observed benefit of depth over width \citep{pascanu2013number}, \citep{montufar2014number}, \citep{telgarsky2016benefits}, \citep{arora2016understanding}, \citep{eldan2016power}, \citep{yarotsky2017error}, \citep{serra2018bounding}.
To evaluate the expressive power of a neural network, we, therefore, need to define {\em a measure of its complexity}. % In the case of ReLU activation function, an intuitive/natural% \mcnote{this is not very nice to read, we must chose one word. Shall we precise a network with relu represents a piecewise linear function? Or is it too obvious ? Since it's the introduction, we should keep it as easy as possible for the reader to follow and get everething, don't we? }
In the case where ReLU is the only considered activation function, a neural network represents a piecewise linear function, thus, a natural method to measure complexity is to count the number of linear regions. % \ytnote{absence of ``?'' is correct gramatically?}
%\mcnote{I think the "?" is more correct, I added it.}
%such as ReLU or maxout, various studies have been conducted on how to measure their expressivity. The number of linear regions in the input space is the most intuitive measure for complexity. Here, a linear region of a piecewise linear function is a maximal connected region on which the function is linear. Since a piecewise linear function is obtained by gluing linear function on linear regions, it is natural to consider that the complexity of its structure increases with its number of linear domains.  
 From this perspective, we can theoretically justify 
%\textcolor{red}{the empirically observed benefit of depth over width} 
favoring depth over width. Moreover, this has also been widely established through experimentation  
%(I try to not repeat the formulation of previous paragraph)} 
\citep{pascanu2013number}, \citep{montufar2014number}, \citep{telgarsky2016benefits}, \citep{arora2016understanding}, \citep{eldan2016power}, \citep{yarotsky2017error}, \citep{serra2018bounding}, \citep{chatziafratis2019depth}, \citep{chatziafratis2020better}. 

Nevertheless, 
%\textcolor{red}{these invariants using the number of linear regions , as we will see later,
%% \mcnote{we can erase "is one that" without affecting meaning of the sentence, and make economy of space } 
%do not adequately reflect the properties of the function.}
 as we subsequently show, using the number of linear regions as a straightforward measure, does not adequately reflect the properties of the underlying function which the network represents in some case. 
%a definitive measure has not yet been established.
%Nevertheless, these invariants using the number of linear regions, as we will see later,
%% \mcnote{we can erase "is one that" without affecting meaning of the sentence, and make economy of space } 
%do not adequately reflect the properties of the function. 
%%a definitive measure has not yet been established.

%In our study, we investigate how the properties of a function affect its complexity.
Concretely, we consider permutation-invariant functions and the model introduced by \citet{zaheer2017deep}. This model has been proven to be a universal approximator for the class of permutation-invariant continuous functions \citep{maron2019universality}, \citep{zaheer2017deep}. %Since this model is permutation invariant, the expressive power of this model is strictly lower than for the fully connected model. However, we point out the maximum number of linear regions of this model is equal to the fully connected model one asymptotically. 
Since this model is permutation-invariant, its expressive power is strictly lower than that of the fully connected model. However, we point out that the maximal number of linear regions for both the models 
%	increase exponentially 
 is asymptotically similar.

% This highlights a weakness \ytnote{is there an other phrase? I don't wanna say someone's result badly}\ytnote{what do you think of "vulnerability"?} of this complexity measure: It cannot distinguish between these two models.
%This highlights the fact that the relationship between \textit{number of linear regions} and \textit{expressive power} may be erroneous because it cannot distinguish between these two models clearly. Thus, we propose a new complexity measure that enables us to %distinguish between these two models.
This highlights the fact that the straightforward relationship between \textit{number of linear regions} and \textit{expressive power} needs to be qualified, because it cannot distinguish between these two models clearly. Thus, we propose a new complexity measure that enables us to reliably make this distinction.
%by making a straightforward connection between  \textit{number of linear regions} and \textit{expressive power}, it is not possible to distinguish these two models.
%\ytnote{This second sentence may be shorter }

%We emphasize that our definition of measure of complexity is quite a 
%general notion. We explain it here. 

%\begin{figure}[t]
%%    \begin{tabular}{cc}
%      %---- 最初の図 ---------------------------
% %     \begin{minipage}[t]{0.45\hsize}
%        \centering
%        \includegraphics[keepaspectratio, scale=0.15]{dogs.pdf}
%        \caption{Relation among digital images w.r.t $c^{\#}$ and $c^{\sim}$}
%        \label{fig:1}
% %     \end{minipage} 
%      %&
%      %---- 2番目の図 --------------------------
%%      \begin{minipage}[t]{0.45\hsize}
%%        \centering
%%        \includegraphics[keepaspectratio, scale=1.6]{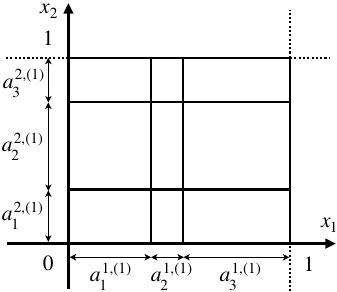}
%%        \caption{A decomposition of $[0,1]^2$ into rectangles ($2$-dim hypercuboids) by $\wt{h}^{(l)}$ for $n=2$, $p=3$}
%%        \label{fig:2}
%%      \end{minipage}
%            %---- 2番目の図 --------------------------
%%      \begin{minipage}[t]{0.45\hsize}
%%        \centering
%%        \includegraphics[keepaspectratio, scale=0.5]{grid-deep.eps}
%%        \caption{3}
%%        \label{fig:3}
%%      \end{minipage}
%      %---- 図はここまで ----------------------
%%    \end{tabular}
%\end{figure}

Our main contribution is to %propose a new complexity measure that can make this distinction
introduce such a measure (Definition \ref{def of inv}) and to prove that the invariant model and the fully connected model actually have different values (Theorem \ref{comp full}, Theorem \ref{thm:complexity-invariant-shallow}). To define our measure, we consider not the \textit{number of linear regions} but  the \textit{number of linear functions on them}. Our measure counts them relative to a certain equivalence relation. This relation identifies linear functions (and their inherent linear region) that can be mapped from one to another through a certain Euclidean transformation, i.e., isometric affine transformation. 
%[write a quick summary of the sections in the paper, but I may not be fixed yet]. 
As a more intrinsic example, we consider digital images. 
The complexity of digital images is usually estimated by the number of pixels and the number of colors. 
The number of pixels corresponds to the conventional measure of complexity, i.e., the number of linear regions $c^{\#}$, 
and the number of colors corresponds to our proposed measure of 
complexity, i.e., the number of linear functions $c^\sim$. 
From this point of view, our measure is a natural consequence.  
We remark that other possible measures of complexity such as using 
Betti numbers of the linear regions \citep{bianchini2014complexity}, trajectories in the input space \citep{raghu2017expressive}, or the volumes of the boundaries of linear regions \citep{hanin2019complexity} have been proposed. 

The low expressive power of permutation-invariant shallow networks   
is translated to the fewness of linear functions ($\approx$``colors'') composing %the networks
 them, due to permutation invariance. 
Indeed, we show that for permutation-invariant shallow models, the proposed measure of complexity is the same as the number of orbits of linear regions by permutation action and that this number 
is relatively small. Our demonstration relies on theory of hyperplane 
 arrangement which is stable by group action studied
 in \citep{kamiya2012arrangements}. %\mcnote{this sentence was too long, I cut it in 2.} 
%Technically, we apply theory of hyperplane 
% arrangement which is stable by group action studied in \citep{kamiya2012arrangements} to count the number of orbits. 
 %Throughout the article, we illustrate some theoretical concepts on simple examples in the Appendix. 
 In Section~\ref{sec:deeper-models}, we modified the argument of 
 \citet{montufar2014number} to prove the benefit of depth over width. Particularly, the complexity of the proposed method increases exponentially 
 %when the depth increases 
 with depth for fully connected deep models  
 %or 
 as well as for permutation-invariant deep models introduced by  \citet{zaheer2017deep}.

\section{Preliminaries and background}  

%\ytnote{summarize the main results}
%
%\ytnote{def of linear regions ... etc} 

A (feedforward) neural network of depth $L+1$ is a composition of 
layers of units which defines a function 
$F\colon \RR^{n_0} \to \RR^{n_{L+1}}$ of 
the form
\begin{align}
 F(\bm{x}) = f_{L+1} \circ g_{L} \circ f_{L} \circ \cdots \circ 
 g_1 \circ f_{1} (\bm{x}), \label{eq:deep-fully-connected}
\end{align}
where $f_l \colon \RR^{n_{l-1}} \to \RR^{n_l}$ is an 
affine map and $g_l \colon \RR^{n_l} \to \RR^{n_l}$ 
is a nonlinear activation function. 
Throughout this paper, as activation functions, we consider the rectifier linear units (ReLU), i.e.,  
for $\bm{x} = (x_1, \dots, x_{n_l})^\top  \in \RR^{n_l}$, 
\[
 \ReLU(\bm{x}) = ( \max\{0, x_1 \}, \dots,  \max\{0, x_{n_l} \})^\top 
  \in \RR^{n_l}. 
\]
%Throughout of this paper, the sets of inputs of any functions 
%which we treat are the hypercube $[0,1]^{n_0}$, because we compare expressive powers 
%%among some neural network models 
%from the perspective of universal approximation theorems. 
Let $\ul{n} = (n_0, n_1, \dots, n_{L+1})$ and $K$ be a connected compact $n_0$-dimensional subset of $\RR^{n_0}$. 
Then, we define $\ca{H}_K^{\text{full}}(\ul{n}) = \ca{H}_K^{\text{full}}(n_0, n_1, \dots, n_{L+1})$ 
as the set of the restriction to $K$ of the neural networks of the form \eqref{eq:deep-fully-connected} with 
$g_l = \ReLU$ for any $l = 1, \dots, L$. We call such a 
network a ReLU neural network. 
The affine transformation $f_l$ can be written as 
$f_l(\bm{x}) = W_l \bm{x} + \bm{c}_l$ with a weight matrix 
$W_l \in \RR^{n_l \times n_{l-1}}$ and a bias vector 
$\bm{c}_l \in \RR^{n_l}$. 
We call the feedforward neural network  {\em shallow} (resp. {\em deep}) 
if $L = 1$ (resp. $L>1$). 
%Otherwise, i.e., if $L>1$, we call this deep. 

Because $\ReLU$ is a continuous piecewise linear function, 
a function realized by a ReLU neural network is also 
continuous and piecewise linear. 
We are interested in the structures of 
such piecewise linear functions. 
Any piecewise linear function is encoded as the set of pairs made of 
a linear region and a linear function on it.  
Here, for a connected compact $m$-dimensional subset $K$ in $\RR^m$ and a piecewise linear function $f \colon K \to \RR^n$, 
a connected region $D \subset K$ is called {\em a linear region of} $f$ 
if $f$ is linear on $D$ and for any connected region 
$D' \subset K$ satisfying $D \subsetneq D'$, $f$ is not 
linear on $D'$. 
For a piecewise linear function $f$, $c^\#(f)$ denotes the number of 
linear regions of $f$. 
For a set of piecewise linear functions $\ca{H}$, we set 
$c^\#(\ca{H}) = \max\{ c^\#(f) \mid f\in \ca{H} \}$.

%Pascanu et al \citep{pascanu2013number} suggested to estimate 
% complexity of a ReLU neural network by the number of linear regions of 
%the piecewise linear function it computes. By comparing %the numbers of linear regions
%this quantity, Pascanu et al \citep{pascanu2013number}, 
%Mont\'{u}far et al \citep{montufar2014number}, 
%Arora et al \citep{arora2016understanding} and Serra et al 
%\citep{serra2018bounding} 
%showed that %the deeper models obtain this complexity better than the shallow models. 
%deep models obtain higher complexity than shallow models.

%To explain a motivation to introduce our new measure of complexity, in the following section, we calculate the number of linear regions for two classes of models of shallow neural networks, fully connected shallow models and permutation invariant models, and consider their relationship to expressive powers. 

%As our motivation is to introduce a new measure of complexity, we present in the following section the calculation of the of the number of linear regions for two different models of shallow neural networks. Namely, the fully connected,  and the permutation invariant shallow models. 

%\subsection{Comparison by the numbers of linear regions} 
%
%We calculate the number of linear regions for a typical model. 
%
%

\subsection{The number of linear regions for shallow fully connected neural networks}\label{sec:num-of-region-shallow-full}

% %The number of its linear regions of piecewise linear functions 
% %is a candidate to estimate a complexity of them \citep{pascanu2013number}. 
% We can calculate the maximum of the number of linear regions 
% for shallow ReLU neural networks explicitly, by using the fact for 
% hyperplane arrangements as in
% \citep{pascanu2013number}.

To calculate the maximum number of linear regions for shallow ReLU neural networks, we use arguments from hyperplane arrangement theory as in \citep{pascanu2013number}.
% \mcnote{In order to calculate the maximum number of linear regions for shallow ReLU neural network, we use arguments from hyperplane arrangement theory as in \citep{pascanu2013number}} 
Let us consider a shallow ReLU neural network 
$F \in \ca{H}_K^{\text{full}}(n_0, n_1, n_2)$, i.e., a network of the form 
%mean that the neural network  
%defined by three layers (an input layer, a hidden layer, an 
%output layer) with ReLU activation function at the hidden layer: 
\begin{align}
 %\RR^{n_0} \to \RR^{n_1} \to \RR^{n_2}; \ 
 F(\bm{x}) = f_2 \circ g_1 \circ f_1( \bm{x}), \label{eq:shallow-fully-connected-model}
\end{align}
where $f_1 \colon K \to \RR^{n_1}$ and 
$f_2 \colon \RR^{n_1} \to \RR^{ n_2}$ are two affine maps and 
$g_1 \colon \RR^{n_1} \to \RR^{n_1}$ is ReLU. 
%This means that $F \in \ca{H}_K^{\text{full}}(n_0, n_1, n_2)$. 
%We define $\ca{H}_K^{\text{full}}(n_0, n_1, n_2)$ as the class of the shallow 
%ReLU neural network of the form \eqref{eq:shallow-fully-connected-model}. 

The linear regions of $F$ depend only on the affine map $f_1$. 
We write $f_1(\bm{x}) = W \bm{x} + \bm{c}$ for 
$W = (a_{ij}) \in \RR^{n_1\times n_0}$ and $\bm{c} = (c_i) \in \RR^{n_1}$. 
Let $H_i$ be the hyperplane in $\RR^{n_0}$ defined as 
\[
 a_{i1} x_1 + \cdots + a_{in_0} x_{n_0} + c_i = 0 \ \cdots \ H_i
\]
for $i = 1, \dots, n_1$. 
Then, the linear regions of $F$ are exactly the chambers of the hyperplanes 
arrangement defined by $\ca{A} = \{ H_1, \dots, H_{n_1} \}$, 
i.e., the connected components of the complement 
$\RR^{n_0}{\setminus}\bigcup_{i} H_i$. Let $\Ch(\ca{A})$ denotes the 
set of chambers of arrangement $\ca{A}$. 
Then, Schl\"{a}fli showed that the cardinality 
$|\Ch(\ca{A})|$ of $\Ch(\ca{A})$ satisfies  
\begin{align}
 | \Ch(\ca{A})| \leq 
  \sum_{i = 0}^{n_0} { n_1 \choose i } \label{eq:trivial-upperbound}
\end{align}
and the equality holds if $\ca{A}$ is in general position \citep[Introduction]{orlik2013arrangements}. 
Here, we say that 
the hyperplane arrangement $\ca{A} = \{ H_1, \dots, H_{n_1} \}$ 
is {\em in general position} if $\ca{A}$ satisfies that 
for any $r = 1, \dots, n_0$, 
%and any $\{ i_1, \dots, i_r\} \subset \{ 1, 2, \dots, n_1 \}$, 
the codimension of 
the intersection $H_{i_1}\cap \cdots \cap H_{i_r}$ 
is equal to $r$ if $r \leq n_1$ and 
$H_{i_1}\cap \cdots \cap H_{i_r} = \emptyset$ if $r>n_1$ (see Appendix~\ref{subsec:Fully-connected-shallow-example} for an illustration).  
%We remark that for the hyperplane arrangement $\ca{A}$ defined 
%by the fully connected shallow neural network above, 
%by perturbing the weight matrix $W$ or the bias vector $\bm{c}$, we make $\ca{A}$ to be in general position.
For the hyperplane arrangement $\ca{A}$ defined by the fully connected shallow neural network above, we remark that it is always possible to make it being in general position by perturbing the weight matrix $W$ and the bias vector $\bm{c}$. 
Moreover, for any connected compact $n_0$-dimensional 
subset $K\subset \RR^{n_0}$ and a hyperplane arrangement $\ca{A}$, 
we can take another hyperplane arrangement $\ca{A}'=\{ H_i' \mid i=1, \dots, n_1\}$ such that $|\Ch(\ca{A})|$ is equal to the number of 
connected components of $K{\setminus}\bigcup_{i}(K\cap H_i')$ 
by translating or scaling $\ca{A}$ if it is necessary. 
In particular, the maximal number $c^\#(\ca{H}_K^{\mathrm{full}}(n_0, n_1,n_2))$ of linear regions of the fully connected shallow ReLU neural network having a $n_0$-dimensional input layer and a $n_1$-dimensional hidden layer is $\sum_{i = 0}^{n_0} { n_1 \choose i }$.
%\[
% \sum_{i = 0}^{n_0} { n_1 \choose i }. 
%\]
For $n_0$ such that $0 \leq n_0\leq n_1/2$, 
by \citep[Section~4.7]{ash1965information}, 
the estimate of the sum 
of binomial coefficients is 
\begin{align}
 \frac{2^{n_1H(n_0/n_1)}}{\sqrt{8n_0(1-n_0/n_1)}} &\leq 
 {n_1 \choose n_0}  
 \leq c^\#(\ca{H}_K^{\mathrm{full}}(n_0, n_1,n_2)) \notag \\ & = 
 \sum_{i = 0}^{n_0} { n_1 \choose i} \leq 2^{n_1H(n_0/n_1)},  
 \label{eq:estimate-linear-region-fully-connected}
\end{align}
where $H(p)$ is the binary entropy function defined as 
\[
 H(p)=-p\log_{2}p-(1-p)\log_{2}(1-p) 
\]
for $0< p <1 $ and $H(0) = H(1)=0$. %(See Lemma~4.7.1 of ) \ytnote{cite something}

\subsection{The number of linear regions for the permutation invariant 
model} 
\label{sec:num-of-region-shallow-invariant} 

We review the permutation invariant shallow model introduced in \citep{zaheer2017deep} and show that this model can have as many linear regions as a fully connected shallow neural network, 
though this model has a lower expressive power than the fully connected model. We illustrate this calculation on a simple example in Appendix~\ref{subsec:permutation-invariant-shallow-example}.

%\ytnote{review permutation invariant model} 

We define the permutation action on $(\RR^n)^m$ of permutation group 
$S_n$ by the following way. For $\sigma \in S_n$ and 
$\bm{X} = (\bm{x}_1, \dots, \bm{x}_m) \in (\RR^n)^m$ where 
$\bm{x}_i = (x_{i1}, \dots, x_{in})^\top \in \RR^n$, we define
\begin{align*}
 \sigma\cdot \bm{X} &= (\sigma\cdot \bm{x}_1, \dots, 
 \sigma\cdot \bm{x}_n), \\ \sigma\cdot \bm{x}_i &= (x_{i\sigma^{-1}(1)}, \dots, x_{i \sigma^{-1}(n)})^\top.  
\end{align*}
For a subset $K$ of $\RR^n$, we say that $K$ is stable by permutation action if for any $\bm{x} \in K$, $\sigma\cdot \bm{x}\in K$ holds 
for any $\sigma \in S_n$. 

We consider a permutation invariant shallow network as 
\begin{align}
% \RR^n \to (\RR^{n})^m \to \RR^{m'}; \ 
 F(\bm{x}) = f_2 \circ g_1 \circ f_1(\bm{x}) \label{eq:shallow-permutation-invariant}
\end{align}
where $f_1 \colon \RR^n \to (\RR^n)^m$ is a permutation 
equivariant affine map, i.e., 
$f_1(\sigma\cdot \bm{x}) = \sigma\cdot f_1(\bm{x})$ for any 
$\sigma \in S_n$ and $\bm{x}\in \RR^n$, 
and $f_2\colon (\RR^n)^m \to \RR^{m'}$ is a permutation invariant 
affine map i.e., 
$f_2(\sigma\cdot \bm{X}) = f_2(\bm{X})$ for any 
$\bm{X}\in (\RR^n)^m$ and $\sigma \in S_n$, and 
$g_1\colon (\RR^n)^m \to (\RR^n)^m$ is ReLU.    
%\ytnote{permutation action should be defined} 
Then, the realized function $F$ is permutation invariant, i.e., 
$F(\sigma\cdot \bm{x}) = F(\bm{x})$ for any $\sigma\in S_n$. 
Let $K$ be a connected compact $n$-dimensional subset of $\RR^n$ which is stable by permutation action. 
Then, we define $\ca{H}_K^\text{inv}(n, mn, m')$ as the set of 
the restrictions to $K$ of the permutation invariant ReLU neural networks of the form \eqref{eq:shallow-permutation-invariant}. 
By universal approximation theorem \citep{maron2019universality}, 
any permutation-invariant, 
continuous function on $K$ can be approximated by such a 
neural networks for large enough $m'$. 

The set of linear regions of the model depends 
only on the affine map $f_1$ as in the fully connected case. 
Using \citep[Lemma~3]{zaheer2017deep}, 
by the permutation equivariance of $f_1$, if we set 
$f_1(\bm{x}) = W\bm{x} + \bm{c}$ for some 
$W\in (\RR^{n \times n})^m$ and $\bm{c}\in (\RR^{n})^m$, 
these $W$ and $\bm{c}$ can be written as 
\begin{align}
W = \begin{pmatrix}
 	a_1I + b_1(I-\bm{1}\bm{1}^\top) \\ \vdots \\ 
 	a_mI + b_m(I-\bm{1}\bm{1}^\top)
 \end{pmatrix}, \ 
 \bm{c} = \begin{pmatrix}
 	c_1 \bm{1} \\ \vdots \\ c_m\bm{1}
 \end{pmatrix}  \label{eq:equivariant-map}
\end{align}
for some $a_1, \dots, a_m, b_1, \dots, b_m, c_1, \dots, c_m \in \RR$.  
Here, $I$ is the identity matrix in $\RR^{n\times n}$ and $\bm{1}$ is 
the all one vector in $\RR^n$. 
%\ytnote{add proof in somewhere?} 
Thus, the set of linear regions of $F$ is equal to the set of chambers of 
the hyperplanes arrangement $\ca{B}_{m,n} = \{H_{11}, \dots, H_{mn} \}$ 
defined for $i = 1, \dots, m$ by, 
%\begin{align}
% \begin{cases}
% 	a_1 x_1 + b_1 x_2 + \cdots + b_1 x_n  + c_1 = 0 & \cdots \ H_{11}\\ 
% 	b_1 x_1 + a_1 x_2 + \cdots + b_1 x_n  + c_1 = 0 & \cdots \ H_{12}\\ 
% 	\hspace{80pt} \vdots & \hspace{20pt}  \\ 
% 	b_1 x_1 + b_1 x_2 + \cdots + a_1 x_n  + c_1 = 0 & \cdots \ H_{1n}
% 	\\ \hspace{80pt} \vdots \\
% 	a_m x_1 + b_m x_2 + \cdots + b_m x_n  + c_m = 0 & \cdots \ H_{m1}\\ 
% 	b_m x_1 + a_m x_2 + \cdots + b_m x_n  + c_m = 0 & \cdots \ H_{m2}\\ 
% 	\hspace{80pt} \vdots & \hspace{20pt}   \\ 
% 	b_m x_1 + b_m x_2 + \cdots + a_m x_n  + c_m = 0 & \cdots \ H_{mn}.\\ 
% \end{cases}\label{eq:invariant-equations}
%\end{align}
%\begin{alignat}{2}
% 	& a_i x_1 + b_i x_2 + \cdots + b_i x_n  + c_i = 0 & \ \cdots \ H_{i1}\\ 
% 	& b_i x_1 + a_i x_2 + \cdots + b_i x_n  + c_i = 0 & \ \cdots \ H_{i2}\\ 
% 	& \hspace{80pt} \vdots & \hspace{20pt} \label{eq:invariant-equations} \\ 
% 	& b_i x_1 + b_i x_2 + \cdots + a_i x_n  + c_i = 0 & \ \cdots \ H_{in}. 
%\end{alignat}
\begin{align}
\begin{cases}
	a_i x_1 + b_i x_2 + \cdots + b_i x_n  + c_i = 0 & \ \cdots \ H_{i1}\\ 
 	 b_i x_1 + a_i x_2 + \cdots + b_i x_n  + c_i = 0 & \ \cdots \ H_{i2}\\ 
 	 \hspace{80pt} \vdots & \hspace{20pt}  \\ 
 	 b_i x_1 + b_i x_2 + \cdots + a_i x_n  + c_i = 0 & \ \cdots \ H_{in}. 
\end{cases}	
\label{eq:invariant-equations}
\end{align}

%\begin{align}
%\begin{minipage}
% 	 a_i x_1 + b_i x_2 + \cdots + b_i x_n  + c_i = 0 & \ \cdots \ H_{i1}\\ 
% 	 b_i x_1 + a_i x_2 + \cdots + b_i x_n  + c_i = 0 & \ \cdots \ H_{i2}\\ 
% 	 \hspace{80pt} \vdots & \hspace{20pt} \label{eq:invariant-equations} \\ 
% 	 b_i x_1 + b_i x_2 + \cdots + a_i x_n  + c_i = 0 & \ \cdots \ H_{in}. 
% \end{minipage} \label{eq:invariant-equations}
%\end{align}

We calculate the number of chambers of the arrangement $\ca{B}_{m,n}$. 
As in inequation \eqref{eq:trivial-upperbound}, the number of chambers are bounded from above 
by $\sum_{i = 0}^{n} { mn \choose i }$ 
and attains this bound if %\ytnote{and only if?} 
the arrangement $\ca{B}_{m,n}$ is in general position. 
However, this arrangement $\ca{B}_{m,n}$ cannot be in general position. 
%Indeed, $H_{11}\cap H_{21} \cap H_{31}=\emptyset$ holds in general.  
%\ytnote{add simple explanation} 
Indeed, the hyperplanes in the arrangement $\ca{B}_{m,n}$ satisfy
\begin{align}
H_{i_1,j}\cap H_{i_2,j}\cap H_{i_3,j} &= \emptyset, \label{eq:condition-intersection-1}\\ 
H_{i_1,j_1}\cap H_{i_1,j_2} \cap H_{i_2,j_1} &= 
 H_{i_1,j_1}\cap H_{i_1,j_2} \cap H_{i_2,j_2} \notag \\ 
 &= H_{i_1,j_1}\cap H_{i_2,j_1} \cap H_{i_2,j_2} \notag \\
&  \label{eq:condition-intersection-2}
\end{align}
for $i_1, i_2, i_3 = 1, \dots, m$ and $j, j_1, j_2 = 1, \dots, n$.
%\ytnote{should we mention $i_k \neq i_{k'}$?} 
%\ytnote{Precisely, we need to mention ``$r$ hyperplanes don't satisfy these conditions are in general position''.}
%The details of calculation is in Appendix. 
Nevertheless, we can calculate the number of chambers of the arrangement $\ca{B}_{m,n}$ by applying the  
Deletion-Restriction theorem 
(Theorem~\ref{thm:deletion-restriction} in Appendix~\ref{sec:proof-num-of-chambers-invariant}) \citep[Theorem 2.56 and Theorem 2.68]{orlik2013arrangements} under the assumption \eqref{eq:condition-intersection-1} and \eqref{eq:condition-intersection-2}. 
%\begin{thm}[Brylawsky, Zaslavsky]\label{thm:deletion-restriction}
% For $\ca{A}$ be a hyperplane arrangement in $\RR^n$, let 
% $(\ca{A}, \ca{A}', \ca{A}'')$ be the triple 
% for a fixed $X \in \ca{A}$ defined as   
% $\ca{A}' = \ca{A}{\setminus}\{X \}$ and 
% \[
%  \ca{A}'' = \{ H \cap X \mid H \in \ca{A}{\setminus}\{X \}, H\cap X \neq \emptyset \}. 
% \] 
% Then, the following holds: 
% \[
%  |\Ch(\ca{A})| = | \Ch(\ca{A}') | + | \Ch(\ca{A}'')|. 
% \] 
%\end{thm}
%\ytnote{Person who showed this theorem may not be Zaslavsky exactly.
% Check this.}
%By applying Theorem~\ref{thm:deletion-restriction} under the assumption \eqref{eq:condition-intersection-1} and \eqref{eq:condition-intersection-2}, we can calculate the maximal 
%number of linear regions. 
The detail of the calculation is in Appendix~\ref{sec:proof-num-of-chambers-invariant}. 
\begin{prop}\label{prop:num-of-chambers-invariant}
We assume that $m> n/2$. 
Then, the maximum $b_{m,n}$ of the number of 
 chambers of $\ca{B}_{m,n}$ is bounded from below by a function 
  $g(m,n)$ which is  polynomial with respect to $m$ of degree $n$, and 
  which the coefficient of the leading term is bounded from 
  below by $(2^{5/4})^n/(n\sqrt{2})$. 
%  \[
%   \frac{(2^{2/3}e^{1/3})^n - 2^{n/2}}{8n e^2}m^n = 
%   \frac{(4e)^{n/3}(1 - (2e^2)^{-n/6})}{8n e^2}m^n. 
%  \]
%% $b_{m,n} \geq (e^n(1- (2/e)^n)/2\sqrt{2})\cdot(m-n/2)^n$.  
%\[
% b_{m,n} \geq 
%% \frac{e^{n-2}(1-e^{-n/2+2})}{32n}
%% \frac{e^n - e^{n/2}}{32n e^2}\cdot(m-n/6)^n.
% \frac{(2^{2/3}e^{1/3})^n - 2^{n/2}}{8n e^2}(m-n/6)^n. 
%%\frac{e^n(1- (2/e)^n)}{2\sqrt{2}}\cdot(m-n/2)^n
%\]
% %\ytnote{write def of $\sim$}
\end{prop}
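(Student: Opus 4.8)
The plan is to compute $b_{m,n}$ exactly, as $(-1)^{n}\chi(\ca{B}_{m,n},-1)$ by Zaslavsky's theorem, where $\chi$ denotes the characteristic polynomial of the ``most generic admissible'' realization of $\ca{B}_{m,n}$, and then to read off and estimate the coefficient of $m^{n}$. The first step is to pin down the unavoidable degeneracies: among all weights $(a_i,b_i,c_i)$ compatible with \eqref{eq:equivariant-map}, a dense set yields an arrangement whose only affine dependences among the forms defining $H_{11},\dots,H_{mn}$ are those forced by \eqref{eq:condition-intersection-1}--\eqref{eq:condition-intersection-2}; exactly as in the fully connected case of Section~\ref{sec:num-of-region-shallow-full}, such a realization maximizes the number of chambers, so it suffices to compute $\chi$ for it. The point that must be checked carefully here is that \eqref{eq:condition-intersection-1}--\eqref{eq:condition-intersection-2} are the \emph{only} obstruction to general position.

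To compute the characteristic polynomial I would apply the Deletion--Restriction theorem (Theorem~\ref{thm:deletion-restriction}), peeling off the $n$ hyperplanes of the last group one at a time; by \eqref{eq:condition-intersection-1}--\eqref{eq:condition-intersection-2} the restriction of $\ca{B}_{m,n}$ to a hyperplane of that group is again assembled from arrangements of the same kind in dimension $n-1$, together with a generic central arrangement coming from the remaining hyperplanes of the last group, and this produces a recursion in $(m,n)$ that one solves in closed form. A cleaner way to organize the same computation --- which also explains the origin of \eqref{eq:condition-intersection-1}--\eqref{eq:condition-intersection-2} --- is to set $s=x_1+\cdots+x_n$ and observe from \eqref{eq:invariant-equations} that each $H_{ij}$ is the hyperplane $\{x_j=\ell_i(s)\}$ for an affine function $\ell_i$ depending \emph{only} on $i$; hence $\ca{B}_{m,n}$ is the restriction to $\Sigma=\{x_1+\cdots+x_n=s\}$ of the arrangement $\ca{A}=\{\,x_j=\ell_i(s)\mid 1\le i\le m,\ 1\le j\le n\,\}$ in $\RR^{n+1}$, whose characteristic polynomial, and that of $\ca{A}\cup\{\Sigma\}$, follow from a short count over the single parameter $s$ (for generic $s$ each coordinate direction meets $m$ walls, and the $\binom{m}{2}$ crossing values of the $\ell_i$ supply the corrections). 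The relation $|\Ch(\ca{A}\cup\{\Sigma\})|=|\Ch(\ca{A})|+|\Ch(\ca{A}^{\Sigma})|$ then yields $b_{m,n}$ as an explicit polynomial $g(m,n)$ in $m$ of degree $n$.

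From the closed form of $g(m,n)$ one reads off the coefficient of $m^{n}$; the hypothesis $m>n/2$ is what keeps the binomial sums appearing there in the balanced regime where the estimate \eqref{eq:estimate-linear-region-fully-connected} and Stirling's formula apply, and a routine calculation then bounds it below by $(2^{5/4})^{n}/(n\sqrt 2)$. I expect the genuine obstacle to be the middle step --- getting the Deletion--Restriction recursion right, and in particular tracking, through \eqref{eq:condition-intersection-2}, exactly which a priori distinct flats are forced to coincide; the reduction to the slice $\Sigma$ is what makes this tractable, since it replaces an arrangement with an intricate intersection lattice by an essentially product arrangement plus a single extra hyperplane.
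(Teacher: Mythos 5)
Your first route is, in outline, the paper's: apply Theorem~\ref{thm:deletion-restriction} repeatedly, peeling off the hyperplanes $H_{1,n},\dots,H_{m,n}$ of the last column, and use \eqref{eq:condition-intersection-1}--\eqref{eq:condition-intersection-2} to recognize the restricted arrangements as smaller instances of the same family. But the step you defer (``I expect the genuine obstacle to be the middle step'') is exactly where the content of the proof lies, and as stated your plan misses two of its ingredients. First, the recursion is not simply in $(m,n)$: restriction to a double intersection $H_{i,n}\cap H_{i',n}$ drops the ambient dimension by $2$ while $m$ and $n$ each drop by $1$, so the paper must introduce an auxiliary family $\ca{B}^{\ell}_{m,n}$ (same incidence relations, ambient dimension $\ell$ decoupled from $n$) and derive the three-term recurrence $b^{\ell}_{m,n}=b^{\ell}_{m,n-1}+m\,b^{\ell-1}_{m,n-1}+\tfrac{m(m-1)}{2}\,b^{\ell-2}_{m-1,n-1}$ of \eqref{eq:recurrence-relation}; the coefficient $\tfrac{m(m-1)}{2}$ comes precisely from the identifications forced by \eqref{eq:condition-intersection-2}. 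Second, the paper never solves this in closed form and never needs Zaslavsky or a proof that \eqref{eq:condition-intersection-1}--\eqref{eq:condition-intersection-2} are the \emph{only} degeneracies (the proposition only claims a lower bound on the maximum, so it suffices that some admissible weights realize these relations and are otherwise generic); instead it bounds the leading coefficient in $m$ from below by a single multinomial term, roughly $\binom{n}{k,\,k,\,n-2k}2^{-k}$ evaluated at $k=n/4$, which the entropy estimate \eqref{eq:estimate-linear-region-fully-connected} turns into $(2^{5/4})^{n}/(n\sqrt2)$. Your sketch asserts that a closed form exists and that a ``routine calculation'' gives the constant, but neither is exhibited, and aiming for exactness saddles you with the genericity claim you flag but do not resolve.

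The proposed ``cleaner way'' via the slice $\Sigma=\{x_1+\cdots+x_n=s\}$ contains a genuine circularity. The decoupled arrangement $\ca{A}=\{x_j=\ell_i(s)\}$ in $\RR^{n+1}$ is indeed easy (for each value of $s$ every coordinate avoids at most $m$ walls, with $\binom{m}{2}$ crossing values of the $\ell_i$ giving corrections), but that only yields $|\Ch(\ca{A})|$. The quantity you actually need is $|\Ch(\ca{A}\cup\{\Sigma\})|$, and by the very deletion--restriction identity you invoke, $|\Ch(\ca{A}\cup\{\Sigma\})|-|\Ch(\ca{A})|=|\Ch(\ca{A}^{\Sigma})|=b_{m,n}$: computing the left-hand side is exactly the original problem. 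Concretely, once $\Sigma$ is added, the ``count over the single parameter $s$'' requires, for each $s$, counting the cells of a box-type arrangement cut by the hyperplane $x_1+\cdots+x_n=s$, which re-couples the coordinates and reproduces the intersection-lattice bookkeeping you were trying to avoid. So this shortcut does not replace the recursion; if you carry out route one with the auxiliary index $\ell$ and then estimate the leading coefficient as above, you recover the paper's argument.
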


\subsection{Comparison of the numbers of linear regions} 

To equalize the number of hidden units in both models, 
we consider the fully connected model \eqref{eq:shallow-fully-connected-model} withd $n_0 = n$ and $n_1 = mn$. Let $K$ be a connected compact $n$-dimensional subset of $\RR^n$ which is 
stable by permutation action.   
By a universal approximation theorem 
\citep{sonoda2017neural}, if we 
increase the number of hidden units of the fully connected 
shallow models, 
the elements of $\ca{H}_K^{\text{full}}(n, mn, m')$ can approximate any continuous 
maps on a compact set of $\RR^n$. 
On the other hand, although 
elements of $\ca{H}_K^\text{inv}(n,mn,m')$ are 
also universal approximators for permutation-invariant functions 
 \citep{maron2019universality}, any function which is not permutation-invariant cannot be approximated by the elements of 
 $\ca{H}_K^\text{inv}(n, mn, m')$. 
%if we increase the number of hidden units of 
%the fully connected shallow models, this models can approximate any 
%continuous maps on a compact set of $\RR^n$ by 
%the universal approximation theorem. 
%However, even if we increase the number of hidden units of 
%the permutation invariant shallow models, this models can realize 
%only the maps which are permutation invariant. 
This implies that the expressive power of the permutation-invariant 
shallow models is strictly lower than for the fully 
connected shallow models. 

In keeping with this observation, 
we compare maximum number of linear regions for the 
fully connected \eqref{eq:shallow-fully-connected-model} and the permutation invariant shallow models \eqref{eq:shallow-permutation-invariant}. 
By the estimate \eqref{eq:estimate-linear-region-fully-connected} with $n_0 = n$ and $n_1 =mn$, we have 
\begin{align*}
 c^\#(\ca{H}_K^{\mathrm{full}})(n, mn, n') &\geq 
 \frac{2^{mnH(1/m)}}{\sqrt{8n(1-1/m)}} \\ 
 &\geq 
 % \frac{e^n (m-1)^n}{\sqrt{8n(1-1/m)}}= 
 \frac{e^n}{2\sqrt{2n}} m^n + O(m^{n-1}). 
\end{align*}
On the other hand, by 
Proposition~\ref{prop:num-of-chambers-invariant}, 
the maximal number of linear regions of permutation invariant shallow 
models is also bounded from below as  
\[
 c^\#(\ca{H}_K^{\mathrm{inv}})(n, mn, n') \geq 
 \frac{(2^{5/4})^n}{n\sqrt{2}}m^n + O(m^{n-1}). 
 %\frac{e^n - e^{n/2}}{32n e^2}\cdot(m-n/6)^n.
\]

%In particular, the maximum of numbers of linear regions of 
%them are very close with respect to the orders of $n$ 
%(the difference is only $\sqrt{n}$). 
In particular, although there is a difference of bases, 
$c^\#(\ca{H}_K^\mathrm{inv})(n,mn,n')$ does also increase 
exponentially with respect to $n$. 
This means that the maximum numbers of 
linear regions cannot represent the 
difference of expressive powers of these models clearly.

%the maximum $a_{m,n}$ of the number of linear regions of 
%this fully connected model also satisfies $a_{m,n} = 
%\Theta((mn)^n)$. 
%Hence, by combining Proposition~\ref{prop:num-of-chambers-invariant}, 
% the order of the maximum of the numbers of 
% linear regions of permutation 
% invariant shallow models is exactly same as for 
% the fully connected models. 

This observation indicates that we should consider some refined 
measure for complexity and expressive power to be able 
to distinguish between these two classes of models more clearly.

\section{Measure of complexity as the numbers of equivalent classes of linear functions}  
\label{sec:our-measure}

%\ytnote{as a typical example, arise a permutation invariant map} 

In this section, we introduce a measure of 
complexity which can distinguish 
permutation-invariant shallow models from fully connected 
shallow models. 
Before proposing a refined measure of complexity, 
we observe the structure of 
piecewise linear functions that are permutation-invariant. 

Let $K$ be a connected compact $n$-dimensional subset of $\RR^n$ which is stable by permutation action and $f \colon K \to \RR^{n'}$ be a piecewise linear function 
which is permutation invariant by the permutation group $S_n$, and $\ca{F}(f) = \{(f_\lambda, D_\lambda) \mid \lambda \in \Lambda \}$ 
the set of pairs of linear regions $D_\lambda \subset K$ of $f$ 
and the linear associated function $f_\lambda$ on $D_\lambda$, i.e., 
$f_\lambda$ is the restriction $f|_{D_\lambda}$ of $f$ on $D_\lambda$. 
We call this set $\ca{F}(f)$ {\em the set of linear functions of} $f$. 
We often abbreviate an element $(f_\lambda, D_\lambda)\in \ca{F}(f)$ to 
$f_\lambda$.  
Then, it is easy to show that 
for any permutation $\sigma \in S_n$ and any linear region $D$ of $f$, 
the image $\sigma(D)$ of $D$ by $\sigma$ is also a linear region. 
%\ytnote{should mention the detail?}
%Indeed, because $f$ is permutation invariant, 
%$f (\sigma \cdot \bm{x}) = f(\bm{x})$ for any $\sigma \in S_n$ and 
%$\bm{x} \in \RR^n$. Hence, $f$ is linear on $D$, then $f$ is also 
%linear on $\sigma(D)$. 
By this fact and the permutation invariance of $f$, 
for any $(f_\lambda, D_\lambda)$ and $\sigma\in S_n$, 
there is a $\lambda'$ such that $\sigma(D_\lambda) = D_{\lambda'}$ and 
$f_{\lambda} = f_{\lambda'}\circ \sigma|_{D_{\lambda}}$. 
Here, we regard the permutation $\sigma$ as a linear transformation 
on $\RR^n$. Then, the linear transformation induced 
by permutation $\sigma$ is isometric with respect to $L^2$-norm, 
because the map taking $L^2$-norm $\bm{x} \mapsto \|\bm{x} \|_2$ is permutation-invariant. 
%In particular, $\sigma$ can be regarded as a Euclidean 

% \begin{figure}[t]
%   % \begin{tabular}{cc}
%       %---- 最初の図 ---------------------------
%       %\begin{minipage}[t]{0.45\hsize}
%         \centering
%         \includegraphics[keepaspectratio, scale=1.4]{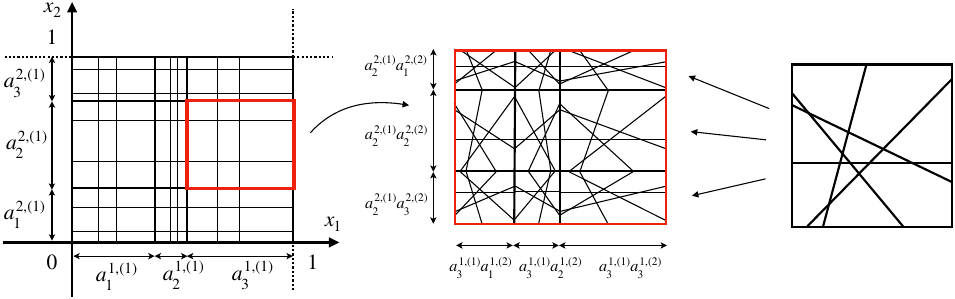}
%         \caption{A grid decomposition of $[0,1]^2$ into rectangles by $\wt{h}^{(2)}\circ \wt{h}^{(1)}$ for $n=2, p_1=p_2 =3$ and an image of copies of hyperplane arrangement in $[0,1]^2$ 
%         by $F$ into the rectangles}
%         \label{fig:3}
%       %\end{minipage}
%       %---- 図はここまで ----------------------
%   % \end{tabular}
%   \end{figure}

%\subsection{A definition of complexity}
Inspired by this observation, we define an equivalence relation 
$\sim$ on the set of pairs $\ca{F}(f)$ of linear functions and regions for piecewise linear function 
$f \colon K \to \RR^{n'}$ as follows:  

\begin{df}\label{def:complexity}
Let $f \colon K \to \RR^{n'}$ be a piecewise linear function and 
 $\mathcal{F}(f)=\{(f_\lambda, D_\lambda) \mid \lambda \in \Lambda \}$ the set of the linear functions of $f$. 
Then, we say that $f_\lambda$ is equivalent to $f_{\lambda'}$, 
denoted by $f_\lambda \sim f_{\lambda'}$, if there is a 
Euclidean transformation $\phi \colon \RR^n \to \RR^n$ satisfying 
%\begin{enumerate}
%%	\item[{\rm (1)}] $\phi$ is $L^2$-distance preserving, i.e., $\|\phi(\bm{x}) - \phi(\bm{y}) \| = \| \bm{x} - \bm{y} \|$ for any $\bm{x}, \bm{y} \in \RR^n$, where $\|\cdot\|$ is 	the $L_2$-norm,  
%	\item[{\rm (1)}] $\phi(D_\lambda) = D_{\lambda'}$,  
%	%\ytnote{$\phi(D_i) = D_j$ is better?} 
%\item[{\rm (2)}] $f_\lambda  = f_{\lambda'}\circ \phi|_{D_\lambda}$. 
%\end{enumerate}
%\[
% {\rm (1)} \ \phi(D_\lambda) = D_{\lambda'}, \ \ \ \ 
%	%\ytnote{$\phi(D_i) = D_j$ is better?} 
%{\rm (2)} \ f_\lambda  = f_{\lambda'}\circ \phi|_{D_\lambda}. 
%\]
$ {\rm (1)} \ \phi(D_\lambda) = D_{\lambda'}$ and  
	%\ytnote{$\phi(D_i) = D_j$ is better?} 
${\rm (2)} \ f_\lambda  = f_{\lambda'}\circ \phi|_{D_\lambda}$.
Here, a Euclidean transformation $\phi$ is an affine map written as 
$\phi(\bm{x}) = A\bm{x} + \bm{b}$ for an orthogonal matrix 
$A$ and a vector $\bm{b}$.
%Then, because $\phi$ is invertible, the relation $\sim$ is an 
% equivalence relation. 
%We define the complexity $c^{\sim}(f)$ of $f$ by the number of the set of 
%equivalent classes $\mathcal{F}/{\sim}$.
\end{df}

%\ytnote{Explain something about 1D examples?}

%This definition reflects the quantity of how many different things there are in the set $\mathcal{F}$ of linear functions induced by the piecewise linear function $f$.
%Conversely, w
We can characterize the invariant function 
for a group action as follows: 
(the proof is in Appendix~\ref{sec:proof-characterization-group-invariant}): 
\begin{prop}\label{prop:characterization-group-invariant}
Let $f$ be a piecewise linear function on $K$ and 
$\ca{F}(f) = \{ (f_\lambda, D_\lambda) \mid \lambda \in \Lambda \}$ the set of linear functions of $f$. 
We assume that there is a set $\Phi=\{\phi_1, \dots, \phi_t\}$ of Euclidean transformations on $\RR^n$ such that for any 
$\phi \in \Phi$ and any linear regions $D_\lambda$, there is a $\lambda'\in\Lambda$ such that
$ {\rm (1)} \ \phi(D_\lambda) = D_{\lambda'}, \ \ \ \ 
	%\ytnote{$\phi(D_i) = D_j$ is better?} 
{\rm (2)} \ f_\lambda  = f_{\lambda'}\circ \phi|_{D_\lambda}$. 
% (1)$\phi_{|_{D_i}} \colon D_i \cong D_j$ holds for some $D_j$, (2) $f_i  = f_j\circ \phi_{|_{D_i}}$holds. 
 Then, $f$ is $\hat{\Phi}$-invariant, where 
$\hat{\Phi}=\langle \phi_1, \dots, \phi_t \rangle$ is the group 
generated by $\Phi$. 
%the elements $\{ \phi_1, \dots, \phi_t \}$. 
\end{prop}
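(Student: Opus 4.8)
The plan is to realize $\hat{\Phi}$ as a subgroup of the symmetry group
$G := \{\, g \text{ a Euclidean transformation of } \RR^n \mid g(K)=K,\ f\circ g = f \text{ on } K \,\}$
of the pair $(K,f)$, and then observe that the inclusion $\hat{\Phi}\subseteq G$ is precisely the asserted $\hat{\Phi}$-invariance of $f$. Since $\hat{\Phi}=\langle\phi_1,\dots,\phi_t\rangle$ may be infinite even though $\Phi$ is finite, one cannot argue by finitely many iterations of the hypothesis; the point is rather that $G$ is a group, so once $\Phi\subseteq G$ is checked one gets $\langle\Phi\rangle\subseteq G$ for free.

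First I would unwind the hypothesis to show each generator $\phi\in\Phi$ lies in $G$. For the invariance relation: given $x\in K$, pick a linear region $D_\lambda\ni x$ (the linear regions cover $K$ since $f$ is piecewise linear), so that $f(x)=f_\lambda(x)$; the hypothesis furnishes $\lambda'$ with $\phi(D_\lambda)=D_{\lambda'}$ and $f_\lambda=f_{\lambda'}\circ\phi|_{D_\lambda}$, hence $f(x)=f_{\lambda'}(\phi(x))=f(\phi(x))$, the last equality because $\phi(x)\in D_{\lambda'}$ and $f_{\lambda'}=f|_{D_{\lambda'}}$. For the stability $\phi(K)=K$: the assignment $\lambda\mapsto\lambda'$ defines a self-map $\sigma_\phi$ of the finite index set $\Lambda$, which is injective because $\phi$ is invertible and distinct linear regions are distinct subsets of $K$, hence bijective; therefore $\phi(K)=\bigcup_\lambda\phi(D_\lambda)=\bigcup_\lambda D_{\sigma_\phi(\lambda)}=\bigcup_\mu D_\mu=K$.

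Next I would verify that $G$ is indeed a group, which is routine: $\id\in G$; if $g,h\in G$ then $(g\circ h)(K)=g(h(K))=K$ and, for $x\in K$, also $h(x)\in K$, so $f(g(h(x)))=f(h(x))=f(x)$; and if $g\in G$ then $g^{-1}(K)=K$ and $f(g^{-1}(x))=f(g(g^{-1}(x)))=f(x)$. Combining with the previous step, $\Phi\subseteq G$, so $\hat{\Phi}=\langle\Phi\rangle\subseteq G$, which says exactly that $\phi(K)=K$ and $f\circ\phi=f$ on $K$ for every $\phi\in\hat{\Phi}$, i.e.\ $f$ is $\hat{\Phi}$-invariant.

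I do not expect a genuine obstacle here: conditions (1) and (2) of the hypothesis already encode pointwise invariance under the generators, and the rest is the standard fact that stabilizers form a group. The one point deserving care is the domain bookkeeping, namely the identity $\phi(K)=K$ for $\phi\in\Phi$ — needed both so that ``$\hat{\Phi}$-invariance'' is well-posed for a function defined only on $K$ and so that $G$ is closed under composition — and this is exactly where finiteness of the set of linear regions of a piecewise linear function is used, to promote injectivity of $\sigma_\phi$ to bijectivity.
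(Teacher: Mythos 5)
Your proposal is correct and follows essentially the same route as the paper: conditions (1)--(2) give $f\circ\phi=f$ pointwise for each generator $\phi\in\Phi$, and invariance then passes to the generated group $\hat{\Phi}$ (the paper phrases this via finite words in the $\phi_i$ and $\phi_i^{-1}$, you via the stabilizer subgroup, which is the same fact). The only substantive difference is that you also verify $\phi(K)=K$, a domain point the paper's proof silently assumes; note, though, that your promotion of injectivity of $\sigma_\phi$ to bijectivity uses finiteness of $\Lambda$, which the proposition does not assume (the paper remarks $\ca{F}(f)$ may be infinite) --- this is easily repaired, since $\phi(K)\subseteq K$ together with compactness of $K$ and the fact that $\phi$ is an isometry already forces $\phi(K)=K$.
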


The relation $\sim$ is an 
 equivalence relation. Then, we propose the following measure of complexity:  
 \begin{df}\label{def of inv}
We define {\em the measure of complexity $c^{\sim}(f)$ of $f$} 
by the number of equivalent classes $\mathcal{F}(f)/{\sim}$. 
For a set $\ca{H}$ of piecewise linear functions, we define {\em the 
measure of complexity $c^{\sim}(\ca{H})$ of $\ca{H}$} by the maximum of $c^{\sim}(f)$ for any 
$f \in \ca{H}$.  
\end{df}
 As a trivial upper bound, 
$c^{\sim}(f)$ is bounded from above by $|\mathcal{F}(f)|$, i.e., 
the number of linear regions. More generally, 
the set $\ca{F}(f)$ of linear functions of $f$ may be infinite. 
However, if $f$ is realized by a ReLU neural network of finite width 
and finite depth, $\ca{F}(f)$ is finite. 

We calculate this measure of complexity for the previous two 
classes of models. 
%, fully connected shallow models and permutation invariant models in the following subsections.  
%we just described. 
We remark that any Euclidean transformation 
$\phi$ does not change the volumes of linear regions.  
Thus, if the volumes of two linear regions $D_\lambda$ and $D_{\lambda'}$ are different, then $f_\lambda$ and $f_{\lambda'}$ 
 cannot be equivalent.  
We use this observation later to count the number of equivalent classes. 

\subsection{Examples of our measure of complexity in $1$-dimensional case}
\label{subsec:examples-1D-body}
%We show examples for $1$-dimensional case as follows:
% in Appendix~\ref{subsec:Examples-1-D}. 

Here, we show some examples in the 1-dimensional case and calculate 
our measures of complexity. 
For simplicity, we consider them on the interval $K=[0,1]$. 

%In Section~\ref{sec:our-measure}, we introduced a measure of complexity for piecewise linear functions. 
Let $f$ be a piecewise linear function on $[0,1]$ and 
$[0,1] = \bigcup_{i=1}^m D_i$ be the decomposition by 
linear regions $D_i$ for $f$ and $f_i = f|_{D_i}$. 
Then, $(f_i, D_i) \sim (f_j, D_j)$ holds only if $|D_i| = |D_j|$, 
where $|D|$ for interval $D =[p,q]$ is the length $q-p$. 
Indeed, by Definition~1, there is a Euclidean transformation 
$\phi\colon \RR^1 \to \RR^1; x \mapsto ax + b$ 
such that $\phi(D_j) = D_i$. As $\phi$ is Euclidean transformation, 
$a$ is equal to $\pm 1$. 
If we set $D_i = [p_i, q_i]$, then 
$\phi(D_j) = D_i$ is equivalent to 
$ [p_j, q_j]=[\phi(p_i), \phi(q_i)] =[ap_i + b, aq_i+b]$. 
As $a = \pm 1$, this implies that $|D_j| = |D_i|$. 
%This implies that if $|D_i| \neq |D_j|$, then $(f_i, D_i) \not\sim (f_j, D_j)$. 
Moreover, then, $ap_i + b = p_j$ holds. In particular, 
$b = p_j - ap_i$ holds. Because $a$ is $1$ or $-1$, there are only 
two choices of the Euclidean transformation $\phi\colon D_i \to D_j$. 

Furthermore, we set 
$f_i(x) = \alpha_i x + \beta_i$ for $i=1, \dots, m$.  
Then, $(f_j \circ \phi)|_{D_i} = f_i$ holds. Hence, 
for $x \in D_i$, 
\[
\alpha_i x + \beta_i = \alpha_j (ax + b) + \beta_j = a\alpha_j x + 
\alpha_j b + \beta_j
\]
holds. Thus, we have $\alpha_i = a \alpha_j$ and 
$\beta_i = b \alpha_j + \beta_j$. 
%\begin{align*}
%	\alpha_i &= a \alpha_j,\\
%	\beta_i &= b \alpha_j + \beta_j. 
%\end{align*}

By combining these arguments, there are at most two linear functions  
on $D_j = [p_j, q_j]$ equivalent to $(f_i,D_i)$ where $f_i(x) = \alpha_ix+\beta_i$ and $D_i=[p_i, q_i]$: For $f_i(x) = \alpha_i x+\beta_i$, 
\begin{align*}
 f_j(x) = \begin{cases}  
 \alpha_ix + \beta_i - (p_j-p_i)\alpha_i &  \text{if } a = 1, \\  
 -\alpha_ix + \beta_i + (p_j+p_i)\alpha_i & \text{if } a = -1.
\end{cases}
\end{align*}

Based on this observation, we show three examples on $[0,1]$. 
%(In Appendix~\ref{subsec:Examples-1-D}, we give two more examples.)

%\textcolor{blue}{In Section~\ref{sec:our-measure}, we introduced a measure of complexity for piecewise linear functions. Let $f$ be a piecewise linear function on $[0,1]$ defined by its linear pieces $\{ (f_i, D_i) , i=1,\dots, m \}$, where $ \bigcup_{i=1}^m D_i = [0,1]$ is the decomposition in linear regions for $f$ and $f_i = f|_{D_i} \forall i$. Let us express a necessary condition for two linear pieces $(f_i, D_i)$ and $(f_j, D_j)$ to be equivalent. Suppose we have $(f_i, D_i) \sim (f_j, D_j)$, by Definition~1, there is a Euclidean transformation $\phi\colon \RR^1 \to \RR^1; x \mapsto ax + b$ 
%such that $\phi(D_j) = D_i$. As $\phi$ is Euclidean transformation, $a$ must be equal to $\pm 1$.
%We set $D_i = [p_i, q_i]$, then $\phi(D_j) = D_i$ is equivalent to $ [p_j, q_j]=[\phi(p_i), \phi(q_i)]=[ap_i + b, aq_i+b]$. As $a = \pm 1$, this implies that $|D_j| = |D_i|$.}

\begin{figure*}[t]
    \begin{tabular}{ccc}
      %---- 最初の図 ---------------------------
      \begin{minipage}[t]{0.3\hsize}
    \centering
    \includegraphics[keepaspectratio,scale=0.35]{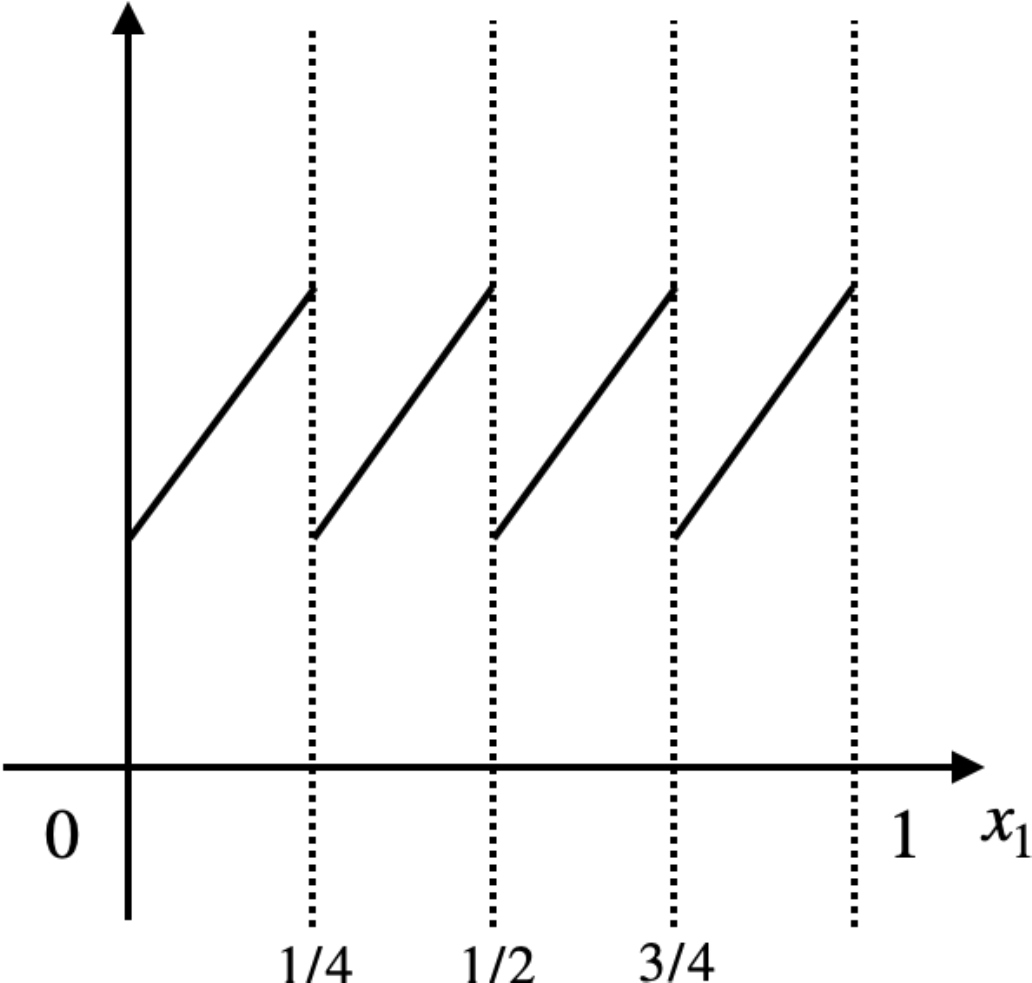}
    \caption{Example~1}
    \label{fig:periodic1}
          \end{minipage} &
      %---- 2番目の図 --------------------------
    \begin{minipage}[t]{0.3\hsize}
    \centering
    \includegraphics[keepaspectratio,scale=0.35]{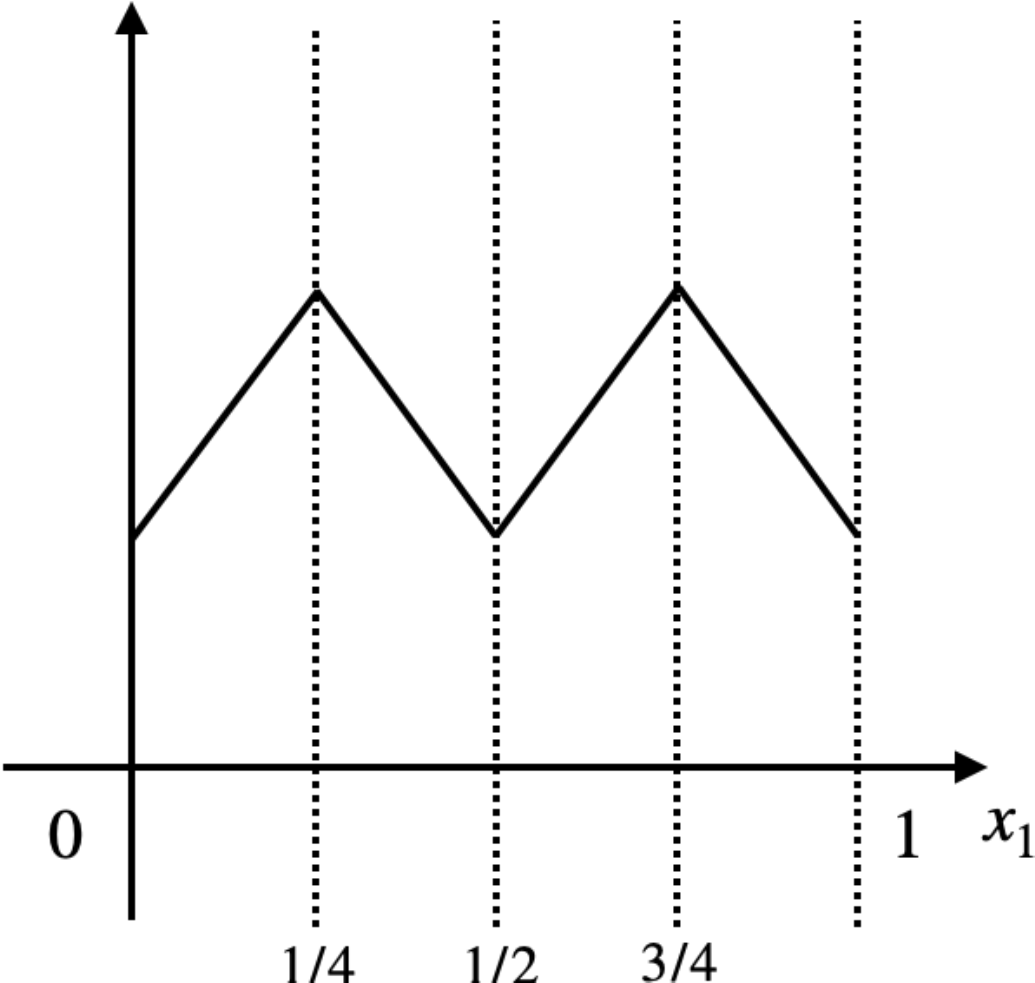}
    \caption{Example~2}
    \label{fig:periodic2}
    \end{minipage} &
            %---- 2番目の図 --------------------------
      \begin{minipage}[t]{0.3\hsize}
    \centering
    \includegraphics[keepaspectratio,scale=0.35]{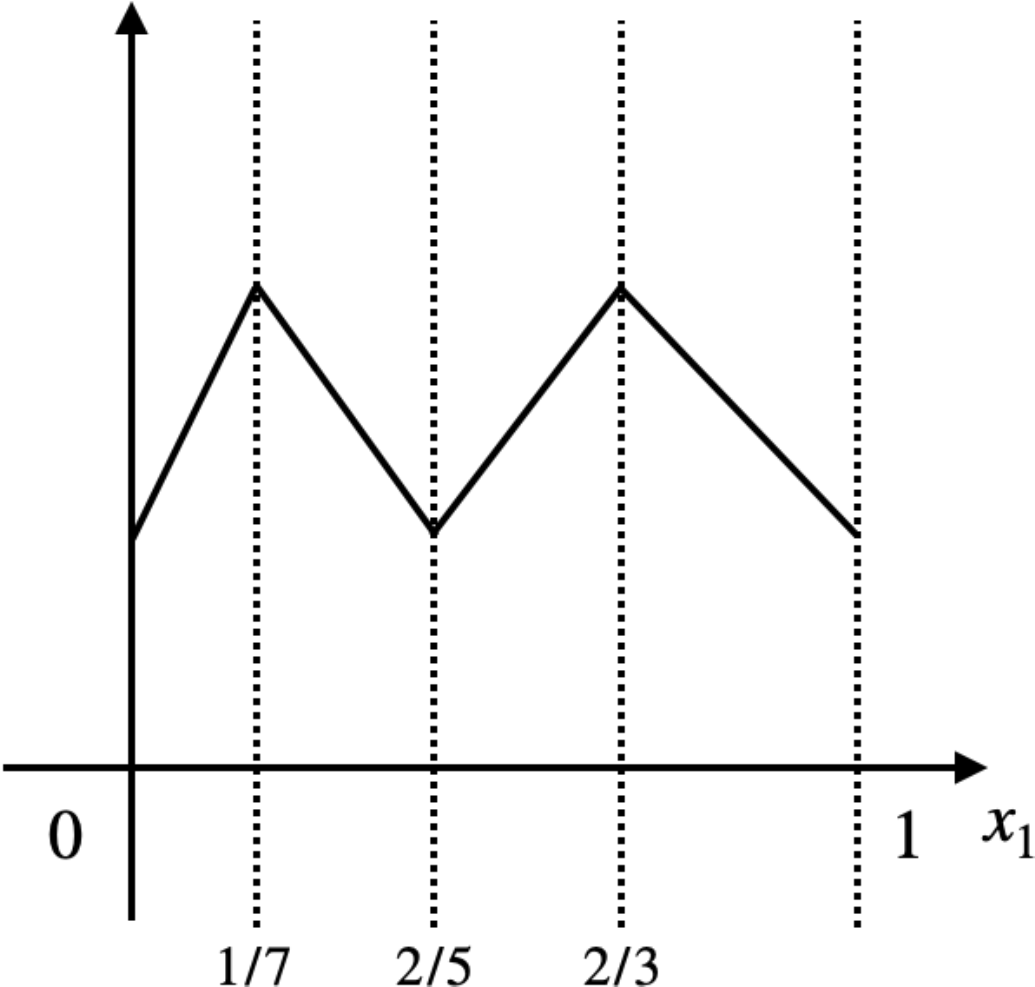}
    \caption{Example~3}
    \label{fig:bara}
      \end{minipage}
\\ \\
      %---- 図はここまで ----------------------
      \multicolumn{3}{c}{
         \begin{tabular}{cc}
      %---- 最初の図 ---------------------------
      \begin{minipage}[t]{0.3\hsize}
    \centering
    \includegraphics[keepaspectratio,scale=0.35]{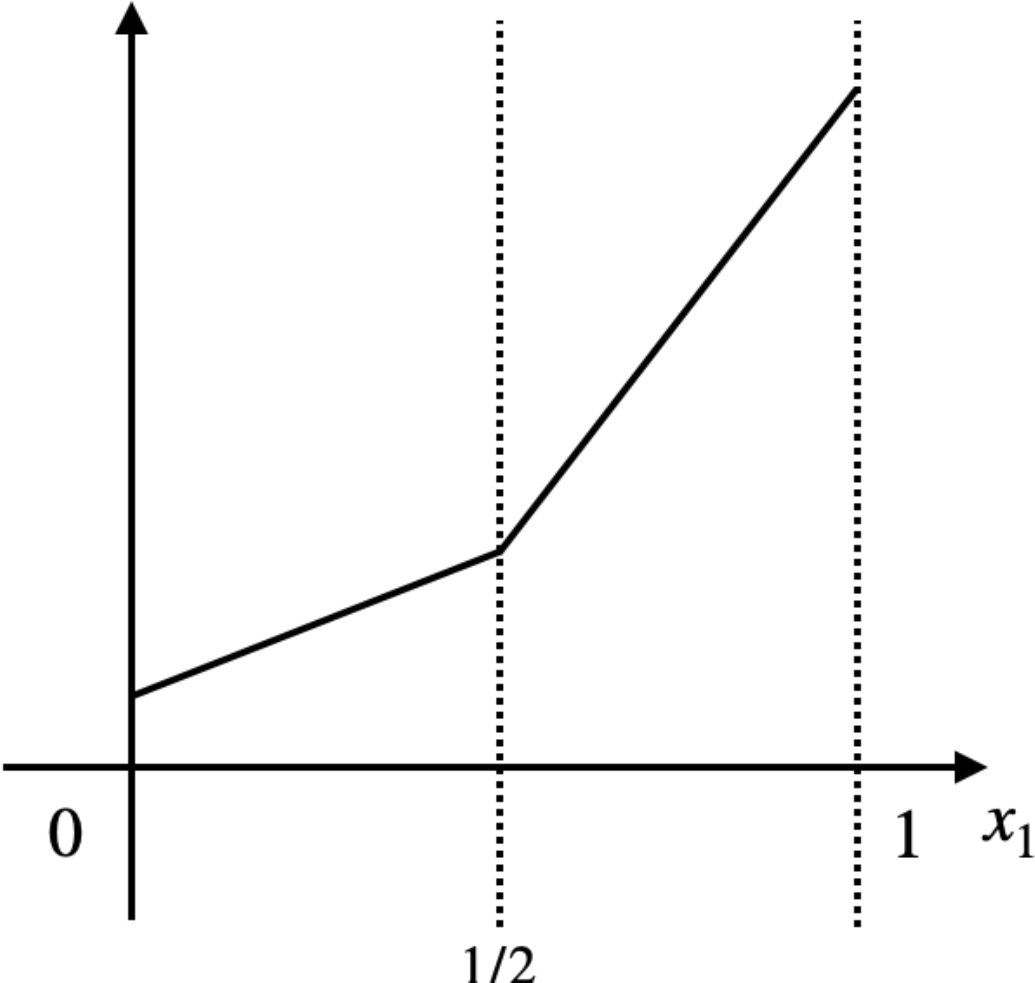}
    \caption{Example~4}
    \label{fig:inequivalent}
          \end{minipage} 
      %---- 2番目の図 --------------------------
    \begin{minipage}[t]{0.3\hsize}
    \centering
    \includegraphics[keepaspectratio,scale=0.35]{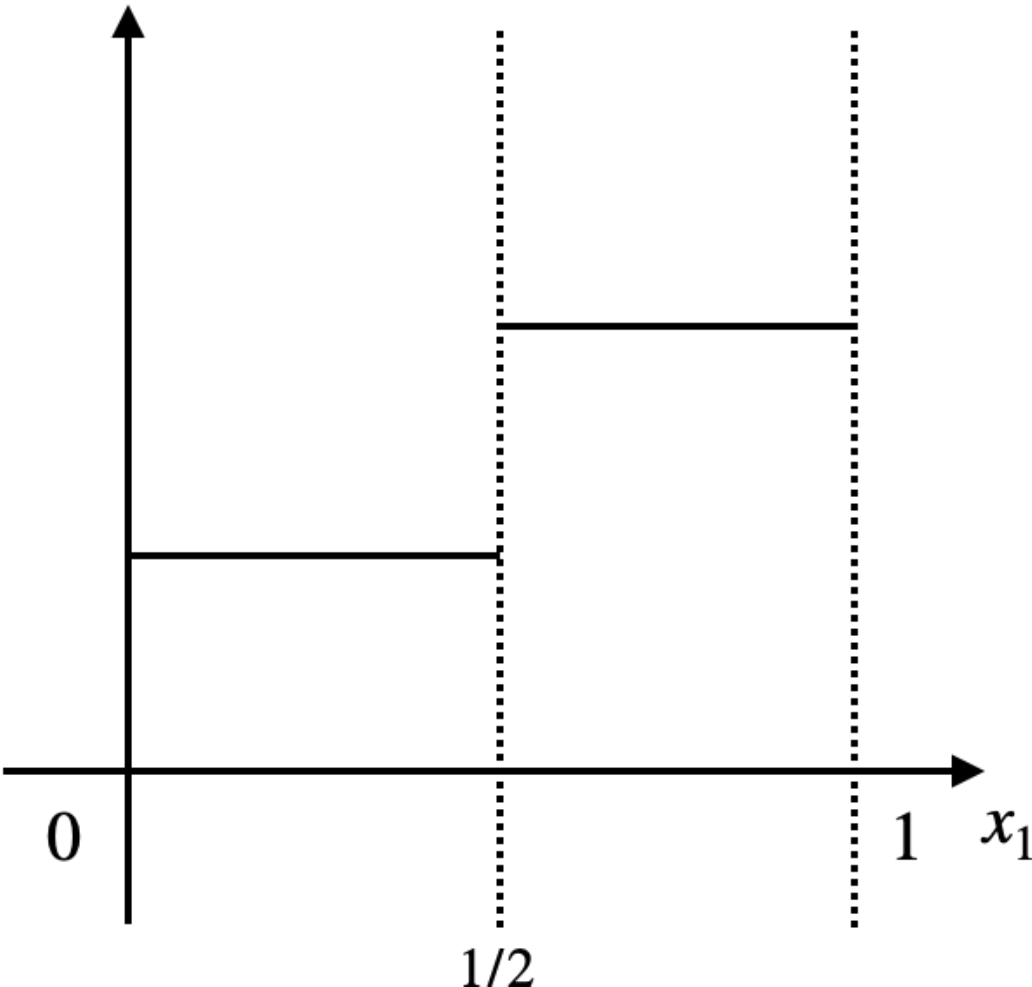}
    \caption{Example~5}
    \label{fig:step-function}
    \end{minipage}
    \end{tabular}
      }
    \end{tabular}
\end{figure*}

\begin{ex}
%\begin{figure}[h]
%    \centering
%    \includegraphics[keepaspectratio,scale=0.3]{periodec1.pdf}
%    \caption{}
%    \label{fig:periodic1}
%\end{figure}
Let $f\colon [0,1] \to \RR$ be the piecewise linear function defined 
by $\{ (f_i, D_i) , i=1,2,3,4 \}$, where $D_1 = [0,1/4]$, $D_2=[1/4, 1/2]$, $D_3 = [1/2, 3/4]$, $D_4 = [3/4, 1]$ and 
\begin{align*}
	\begin{cases}
		f_1(x) = \alpha x + \beta & \mathrm{for } \ x \in D_1, \\
		f_2(x) = \alpha(x-1/4) + \beta & \mathrm{for } \ x \in D_2, \\ 
		f_3(x) = \alpha(x-1/2) + \beta & \mathrm{for } \ x \in D_3, \\ 
		f_4(x) = \alpha(x-3/4) + \beta & \mathrm{for } \ x \in D_4. \\ 
	\end{cases}
\end{align*} 
The function is drawn on Figure~\ref{fig:periodic1}. 
By the Euclidean transformation $\phi\colon [0,1/4] \to [1/4, 1/2]$, 
$x\mapsto x+1/4$, $(f_1, D_1) \sim (f_2, D_2)$ holds. 
Similarly, $(f_i, D_i) \sim (f_1, D_1)$ holds for any $i$. 
Hence, $c^\sim(f) = 1$.  

\end{ex}

\begin{ex}
%\begin{figure}[h]
%    \centering
%    \includegraphics[keepaspectratio,scale=0.3]{periodec1.pdf}
%    \caption{}
%    \label{fig:periodic1}
%\end{figure}
Let $f\colon [0,1] \to \RR$ be the piecewise linear function defined 
by $\{ (f_i, D_i) , i=1,2,3,4 \}$, where $D_1 = [0,1/4]$, $D_2=[1/4, 1/2]$, $D_3 = [1/2, 3/4]$, $D_4 = [3/4, 1]$ and 
\begin{align*}
	\begin{cases}
		f_1(x) = \alpha x + \beta & \mathrm{for } \ x \in D_1, \\
		f_2(x) = -\alpha(x-1/4) + \alpha/4 +\beta & \mathrm{for } \ x \in D_2, \\ 
		f_3(x) = \alpha(x-1/2) + \beta & \mathrm{for } \ x \in D_3, \\ 
		f_4(x) = -\alpha(x-3/4) + \alpha/4 + \beta & \mathrm{for } \ x \in D_4, \\ 
	\end{cases}
\end{align*} 
as Figure~\ref{fig:periodic2}. 
By the Euclidean transformation $\phi\colon [0,1/4] \to [1/4, 1/2]$, 
$x\mapsto -x+1/2$, $(f_1, D_1) \sim (f_2, D_2)$ holds. 
Similarly, $(f_i, D_i) \sim (f_1, D_1)$ holds for any $i$. 
Hence, $c^\sim(f) = 1$.  

\end{ex}

\begin{ex}
Let $f\colon [0,1] \to \RR$ be the piecewise linear function defined 
by $\{ (f_i, D_i) , i=1,2,3,4 \}$, where $D_1 = [0,1/7]$, $D_2=[1/7, 2/5]$, $D_3 = [2/5, 2/3]$, $D_4 = [2/3, 1]$ as  Figure~\ref{fig:bara}. 
Then, we have $|D_1| = 1/7$, $|D_2| = 9/35$, $|D_3|=4/15$, and $|D_4|=1/3$. 
By the above argument in the beginning of this section, 
there is no Euclidean transformation $\phi$ such that $\phi(D_j)=D_i$ 
for any $i\neq j$. 
Hence, $c^\sim(f) = 4$.   

\end{ex}

\begin{ex}
Let $f\colon [0,1] \to \RR$ be the piecewise linear function defined 
by $\{ (f_i, D_i) , i=1,2 \}$, where $D_1 = [0,1/2]$, $D_2=[1/2, 1]$,  
\begin{align*}
	\begin{cases}
		f_1(x) = \alpha_1 x + \beta_1 & \mathrm{for } \ x \in D_1, \\
		f_2(x) = \alpha_2 x +\beta_2 & \mathrm{for } \ x \in D_2, 
	\end{cases}
\end{align*} 
such that $|\alpha_1| \neq |\alpha_2|$ as  Figure~\ref{fig:inequivalent}. 
Then, there is no Euclidean transformation $\phi$ such that $\phi(D_1)=D_2$. 
Hence, $c^\sim(f) = 2$.   

\end{ex}

\begin{ex}
Let $f\colon [0,1] \to \RR$ be the piecewise linear function defined 
by $\{ (f_i, D_i) , i=1,2 \}$, where $D_1 = [0,1/2]$, $D_2=[1/2, 1]$,  
\begin{align*}
	\begin{cases}
		f_1(x) = \beta_1 & \mathrm{for } \ x \in D_1, \\
		f_2(x) = \beta_2 & \mathrm{for } \ x \in D_2, 
	\end{cases}
\end{align*} 
such that $\beta_1 \neq \beta_2$ as  Figure~\ref{fig:step-function}. 
Then, there is no Euclidean transformation $\phi$ such that $\phi(D_1)=D_2$. 
Thus, $c^\sim(f) = 2$.   

\end{ex}

\subsection{Fully connected shallow models} 
\label{sec:complexity-shallow-full}

In this subsection, we show the existence of a fully connected 
model as \eqref{eq:shallow-fully-connected-model} 
for which the proposed complexity is equal to 
$\sum_{i=0}^{n_0} {n_1 \choose i}$. 
Let $F$ be a ReLU shallow neural network model as 
\eqref{eq:shallow-fully-connected-model} and 
$\ca{F}(F) = \{ (F_i, D_i) \mid i = 1, \dots, N \}$ be 
the set of linear functions of $F$.  
As remarked above, by the condition of Definition~\ref{def:complexity} (1), if the volumes of two linear regions $D_i$ and $D_j$ 
are different, the corresponding linear functions $F_i$ and $F_j$ cannot be equivalent. 
%On the other hand, our complexity $c^{\sim}(F)$ is trivially 
%bounded from above by the number $N = |\ca{F}(F)|$ of linear regions. 
Therefore, if all the linear regions $D_i$ have different volumes, 
all the equivalence class of $F$ are singletons, and its complexity $c^{\sim}(F)$  is equal to the number 
$N$ of linear regions.  
By perturbing the weight matrix $W$ or the bias vector 
$\bm{c}$, we can make $F$ satisfy this condition. 
%\ytnote{write the details of proofs?}
Hence, the measure of the complexity $c^{\sim}(\ca{H}_K^{\mathrm{full}}(n_0, n_1, n_2))$ 
of fully connected shallow ReLU neural networks remains equal to
$\sum_{i=0}^{n_0} {n_1 \choose i}$.

\begin{thm}\label{comp full}
The measure of complexity of $\ca{H}_K^{\mathrm{full}}(n_0, n_1, n_2)$ 
is equal to 
$c^{\#}(\ca{H}_K^{\mathrm{full}}(n_0, n_1, n_2))$. In particular, 
the following holds: 
 \[
  c^{\sim}(\ca{H}_K^{\mathrm{full}}(n_0, n_1, n_2)) \geq 
  \frac{2^{n_0 H(n_1/n_0)}}{\sqrt{8n_0(1-n_0/n_1)}}. 
 \]
\end{thm}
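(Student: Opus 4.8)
The plan is to establish the two inequalities $c^\sim(\ca{H}_K^{\mathrm{full}}) \le c^\#(\ca{H}_K^{\mathrm{full}})$ and $c^\sim(\ca{H}_K^{\mathrm{full}}) \ge c^\#(\ca{H}_K^{\mathrm{full}})$ separately, so that the two measures coincide; the displayed lower bound then follows immediately by substituting the known value $c^\#(\ca{H}_K^{\mathrm{full}}(n_0,n_1,n_2)) = \sum_{i=0}^{n_0}\binom{n_1}{i}$ (Section~\ref{sec:num-of-region-shallow-full}) into the entropy estimate \eqref{eq:estimate-linear-region-fully-connected}. The first inequality is formal: for any piecewise linear $f$ the quotient $\ca{F}(f)/{\sim}$ is a partition of $\ca{F}(f)$, hence $c^\sim(f) \le |\ca{F}(f)| = c^\#(f)$, and taking the maximum over $f \in \ca{H}_K^{\mathrm{full}}$ gives the claim.

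For the reverse inequality it suffices to produce one network $F \in \ca{H}_K^{\mathrm{full}}(n_0,n_1,n_2)$ whose complexity $c^\sim(F)$ equals the maximal region count $N := \sum_{i=0}^{n_0}\binom{n_1}{i}$. The key observation, already noted before the statement and built into Definition~\ref{def:complexity}, is that every Euclidean transformation preserves Lebesgue measure, so $(f_\lambda,D_\lambda)\sim(f_{\lambda'},D_{\lambda'})$ forces $\vol(D_\lambda)=\vol(D_{\lambda'})$. Therefore, if the linear regions of $F$ have \textbf{pairwise distinct volumes}, every $\sim$-class is a singleton and $c^\sim(F)=c^\#(F)$. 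Thus the task reduces to exhibiting weights $(W,\bm c)$ such that (i) the associated arrangement $\ca{A}=\{H_1,\dots,H_{n_1}\}$ is in general position and, after the translation/scaling of Section~\ref{sec:num-of-region-shallow-full}, the sets $D_i = K\cap(\text{chamber})$ number exactly $N$ and correspond bijectively to $\Ch(\ca{A})$; and (ii) these $N$ regions — which are bounded, hence of finite $n_0$-dimensional volume, since $K$ is compact — have pairwise distinct volumes.

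To obtain such $(W,\bm c)$, start from a pair $(W_0,\bm c_0)$ satisfying (i). Since both general position and the resulting chamber count are open conditions, they persist on a neighborhood $U$ of $(W_0,\bm c_0)$, on which the combinatorial type of $\ca{A}$ — together with a consistent labelling $D_1,\dots,D_N$ of its regions inside $K$ — is locally constant, so each $\vol(D_i)$ is a real-analytic function of $(W,\bm c)\in U$. For a pair $i\ne j$, the zero locus of $\vol(D_i)-\vol(D_j)$ is either a proper analytic subset of $U$ or all of $U$; an arbitrarily small translation or rotation of a single hyperplane separating $D_i$ from $D_j$ changes their volumes asymmetrically, which rules out the second case, so the set of parameters where some two region volumes coincide is a finite union of proper analytic subsets of $U$, hence nowhere dense. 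Choosing $(W,\bm c)\in U$ outside this bad set produces the desired $F$ with $c^\sim(F)=N$, whence $c^\sim(\ca{H}_K^{\mathrm{full}})\ge N=c^\#(\ca{H}_K^{\mathrm{full}})$; combined with the first inequality this yields equality, and \eqref{eq:estimate-linear-region-fully-connected} gives the stated bound.

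The main obstacle is the genericity step in (ii): one must verify that $\vol(D_i)-\vol(D_j)$ is not identically zero on the general-position stratum, which is exactly where the fixed — and possibly highly symmetric — compact set $K$ could interfere. The remedy is to exploit the remaining freedom to translate and rescale $\ca{A}$ relative to $K$ (Section~\ref{sec:num-of-region-shallow-full}), so that a generic placement breaks any volume coincidence while keeping the chamber count maximal; making the ``asymmetric change'' assertion precise (for instance by moving one hyperplane along a transverse direction and computing the first-order change of the two adjacent chamber volumes, which have opposite signs) is the only genuinely technical point, the rest being bookkeeping.
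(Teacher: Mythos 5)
Your proof follows essentially the same route as the paper: Euclidean transformations preserve volume, so making all linear regions of a maximal (general-position) arrangement have pairwise distinct volumes forces every equivalence class to be a singleton, giving $c^{\sim}=c^{\#}$, after which the entropy estimate \eqref{eq:estimate-linear-region-fully-connected} yields the stated bound. The only difference is that you spell out the perturbation/genericity step in detail, whereas the paper simply asserts that perturbing $W$ or $\bm{c}$ achieves distinct volumes; your elaboration is consistent with, not divergent from, the paper's argument.
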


\subsection{Permutation invariant models}\label{sec:complexity-shallow-invariant}

Next, we consider this complexity for the permutation-invariant 
model. In this case, 
the permutation action of permutation group 
$S_n$ induces equivalence on linear regions. This effect causes 
the gap between our complexity and the number of linear regions. 
In particular, for a permutation-invariant model $F$, the 
complexity $c^{\sim}(F)$ is equal to the number of orbits of 
linear regions via permutation action. 
To calculate the number of orbits of linear regions, 
% we apply a fact in hyperplane arrangement which is stable by 
% group actions \mcnote{
we use arguments from \textit{group action stable} hyperplanes arrangement theory investigated in 
 \citep{kamiya2012arrangements}. See Appendix~\ref{subsec:Measure-of-complexity-example} for an illustration on a simple example.

Let $K$ be a connected compact $n$-dimensional subset of $\RR^n$ 
which is stable by permutation action. 
As in Section~\ref{sec:num-of-region-shallow-invariant}, 
the linear regions of the restriction to $K$ of permutation-invariant model $F$ defined in 
\eqref{eq:shallow-permutation-invariant}, are the chambers of the 
hyperplanes arrangement 
$\ca{B}_{m,n} = \{ H_{ij} \mid i =1, \dots, m, j = 1, \dots, n \}$
defined in \eqref{eq:invariant-equations}. 
Then, $\ca{B}_{m,n}$ is stable by the permutation action, i.e., 
for any $\sigma \in S_n$, $\sigma(H_{ij}) = H_{i\sigma^{-1}(j)}$ 
holds, where $\sigma(H_{ij}) = \{ \sigma\cdot\bm{x} \mid \bm{x}\in H_{ij} \}$. 
%We apply Theorem~2.6 in \citep{kamiya2012arrangements} with  
%$V = \RR^n$, $W = S_n$, $\ca{B} = \{ H_{ij} \mid i = 1, \dots, m, j = 1, \dots, n\}$ as in Section~\ref{sec:num-of-region-shallow-invariant}.   
Then, the set of chambers $\Ch(\ca{B})$ is also stable by 
permutation action. We remark that the measure of complexity 
$c^{\sim}(\ca{H}_K^\text{inv}(n,mn,m'))$ is equal to the maximum number of orbits of $\Ch(\ca{B})$, because by perturbing weight matrix or bias, we may assume that any two chambers in different orbits have different volumes.   
We set $\ca{A}_n$ to be the arrangement 
$\{ W_{ij} \mid 1\leq i < j \leq n\}$ called  the \textit{Coxeter arrangement} of $S_n$, 
where $W_{ij}$ is the hyperplane defined by the equation $x_i - x_j = 0$. 
% This hyperplane arrangement $\ca{A}$ is called the Coxeter arrangement of the permutation group $S_n$. 
We may assume that $\ca{A}_n \cap \ca{B}_{m,n} = \emptyset$ 
by perturbing weight matrix or bias vector if required. 
Let $\ca{C}_{m,n}= \ca{A}_{n}\cup \ca{B}_{m,n}$. 
%and $\Ch(\ca{B}_{m,n})$ be the set of chambers of $\ca{B}_{m,n}$. 
Then, by \citep[Th.~2.6]{kamiya2012arrangements}, 
the following holds: 
\begin{thm}%[{\citep[Th.~2.6]{kamiya2012arrangements}}]
\label{thm:arrangement-group-action}
 The number of orbits of $\Ch(\ca{B}_{m,n})$ with respect to permutation action 
 is equal to $|\Ch(\ca{C}_{m,n})|/n!$. 
\end{thm}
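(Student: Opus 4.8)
The plan is to combine two facts: (a) $S_n$ acts \emph{freely} on $\Ch(\ca{C}_{m,n})$, so that $|\Ch(\ca{C}_{m,n})/S_n| = |\Ch(\ca{C}_{m,n})|/n!$; and (b) the natural map $\Ch(\ca{C}_{m,n})/S_n \to \Ch(\ca{B}_{m,n})/S_n$ is a bijection. Together these give the asserted equality. This is essentially \citep[Th.~2.6]{kamiya2012arrangements}, so one may simply invoke it, but below I outline a direct argument.

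For (a): since $\ca{A}_n \subseteq \ca{C}_{m,n}$, every chamber of $\ca{C}_{m,n}$ lies in a unique chamber of $\ca{A}_n$, and this assignment is $S_n$-equivariant. Because $S_n$ permutes the $n!$ Coxeter chambers $\{x_{\sigma(1)} < \cdots < x_{\sigma(n)}\}$ simply transitively, any $\sigma \in S_n$ fixing a chamber of $\ca{C}_{m,n}$ also fixes the Coxeter chamber containing it, hence $\sigma = \mathrm{id}$; so the action on $\Ch(\ca{C}_{m,n})$ is free.

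For (b): I would take $\pi \colon \Ch(\ca{C}_{m,n}) \to \Ch(\ca{B}_{m,n})$ sending a chamber to the unique $\ca{B}_{m,n}$-chamber containing it. This $\pi$ is $S_n$-equivariant and surjective (each $\ca{B}_{m,n}$-chamber is open while $\bigcup \ca{A}_n$ is nowhere dense, so it contains a $\ca{C}_{m,n}$-chamber), hence descends to a surjection $\overline{\pi}$ on orbit sets. Injectivity of $\overline{\pi}$ reduces to the claim that, for every $b \in \Ch(\ca{B}_{m,n})$, the stabilizer $\Stab_{S_n}(b)$ acts transitively on the fiber $\pi^{-1}(b)$: if $\pi(c) = b$, $\pi(c') = b'$ and $b' = \sigma b$, then $c$ and $\sigma^{-1} c'$ both lie in $\pi^{-1}(b)$, so some $u \in \Stab_{S_n}(b)$ sends $c$ to $\sigma^{-1} c'$, whence $c' = \sigma u c$ and $[c] = [c']$. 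Now $\pi^{-1}(b)$ consists of the chambers of the arrangement that $\ca{C}_{m,n}$ induces on the open convex set $b$, and since no hyperplane of $\ca{B}_{m,n}$ meets $b$, these are only the hyperplanes of $\ca{A}_n$ crossing $b$. As the chamber-adjacency graph of a finite arrangement inside a convex open set is connected (join two interior points by a segment missing the codimension-$2$ skeleton), it is enough to treat two chambers $c_1, c_2 \subseteq b$ sharing a facet $F$. That facet lies on a single hyperplane $H$, which must belong to $\ca{A}_n$ (it cannot belong to $\ca{B}_{m,n}$, as no $\ca{B}_{m,n}$-hyperplane separates points of $b$). Writing $s_H \in S_n$ for the reflection across $H$, one checks that $s_H(c_1)$ is the chamber of $\ca{C}_{m,n}$ on the far side of $H$ meeting $F$, i.e. $s_H(c_1) = c_2$; moreover $s_H(b)$ is a $\ca{B}_{m,n}$-chamber containing $c_2$, which forces $s_H(b) = b$ and $s_H \in \Stab_{S_n}(b)$. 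This establishes the transitivity, hence (b), and the theorem follows.

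The main obstacle is part (b), specifically the transitivity of $\Stab_{S_n}(b)$ on a fiber; the crux is recognizing that any wall separating two $\ca{C}_{m,n}$-chambers that lie in one common $\ca{B}_{m,n}$-chamber is necessarily a Coxeter wall, so that reflecting across it preserves that $\ca{B}_{m,n}$-chamber. The genericity reduction $\ca{A}_n \cap \ca{B}_{m,n} = \emptyset$ already made in the text is exactly what makes this $\ca{A}_n$-versus-$\ca{B}_{m,n}$ dichotomy unambiguous. The remaining ingredients — simple transitivity of $S_n$ on Coxeter chambers, and connectedness of the chamber-adjacency graph of a finite arrangement in a convex open set — are standard and need only the usual generic-segment argument.
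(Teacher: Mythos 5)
Your argument is correct. Note that the paper itself gives no proof of this statement: it simply invokes \citep[Th.~2.6]{kamiya2012arrangements}, which is exactly the general result (for an arrangement stable under a Coxeter group) that you reprove here for $S_n$. Your direct route is sound: freeness of the $S_n$-action on $\Ch(\ca{C}_{m,n})$ follows, as you say, from equivariance of the assignment of each $\ca{C}_{m,n}$-chamber to the unique Coxeter chamber of $\ca{A}_n$ containing it, together with simple transitivity of $S_n$ on the $n!$ Coxeter chambers; and the bijection $\Ch(\ca{C}_{m,n})/S_n \to \Ch(\ca{B}_{m,n})/S_n$ hinges precisely on the point you isolate, namely that any wall separating two $\ca{C}_{m,n}$-chambers inside a single $\ca{B}_{m,n}$-chamber $b$ meets the interior of $b$ and hence must be a hyperplane $x_i = x_j$ of $\ca{A}_n$, so the corresponding transposition lies in $\Stab_{S_n}(b)$ and realizes the adjacency step; connectedness of the chamber-adjacency graph inside the convex set $b$ then gives transitivity of $\Stab_{S_n}(b)$ on the fiber. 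This also makes explicit where the hypotheses enter: $S_n$-stability of $\ca{B}_{m,n}$ (so that $\ca{C}_{m,n}$ is $S_n$-stable and the reflection maps $b$ to a $\ca{B}_{m,n}$-chamber meeting $b$) and the genericity assumption $\ca{A}_n \cap \ca{B}_{m,n} = \emptyset$ already imposed in the text. What the citation buys is brevity and the general Coxeter-group statement; what your argument buys is a self-contained proof whose only external ingredients are the standard generic-segment connectivity lemma and simple transitivity on Weyl chambers.
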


This theorem allows us to reduce the calculation of the number of 
\textit{orbits of chambers} of $\Ch(\ca{B}_{m,n})$ to the calculation of the 
number $|\Ch(\ca{C}_{m,n})|$ of chambers of $\Ch(\ca{C}_{m,n})$.  
%Now, $|W| = |S_n| = n!$. 
%Hence, to calculate the number of orbits, 
%it suffices to calculate $|\Ch(\ca{C}_{m,n})|$.  
%This can be calculated by the argument of M\"obius functions. 
This can be calculated inductively using 
the Deletion-Restriction theorem (Theorem~\ref{thm:deletion-restriction} in 
Appendix~\ref{sec:proof-num-of-chambers-invariant}). 
%\citep[Introduction~P.4]{orlik2013arrangements}. 
%\begin{thm}[Zaslavsky]\label{thm:deletion-restriction}
% For $\ca{A}$ be a hyperplane arrangement in $\RR^n$, let 
% $(\ca{A}, \ca{A}', \ca{A}'')$ be defined as the triple 
% for a fixed $X \in \ca{A}$ (defined in \citep[Definition~1.13 and ~1.14 ]{orlik2013arrangements} \ytnote{write down}). Then, the following holds: 
% \[
%  |\Ch(\ca{A})| = | \Ch(\ca{A}') | + | \Ch(\ca{A}'')|. 
% \] 
%\end{thm}
%\ytnote{Person who showed this theorem may not be Zaslavsky exactly.
%Check this.}
%By using these facts, we can calculate the complexity of permutation 
%invariant model as follows: 
Then, we obtain the following estimate of the complexity of 
permutation-invariant shallow model: 
%(proved in Appendix~\ref{sec:proof-complexity-invariant-shallow}): 
\begin{thm}\label{thm:complexity-invariant-shallow}
% The maximal number of orbits of permutation invariant shallow models 
%  is $O((mn)^n/n!)$. In particular, by Stirling's formula, 
%  this is $O((me)^n/\sqrt{n})$, where $e$ is Napier's constant.
The measure of complexity of $\ca{H}_K^\mathrm{inv}(m,mn,m')$ satisfies 
$c^{\sim}(\ca{H}_K^\mathrm{inv}(m,mn,m')) \leq (n+\alpha)!/\alpha!n!$. 
Here, $\alpha = 2^{mH(1/m)}$ and 
$\gamma!$ for a positive real number $\gamma$ is the 
generalized factorial defined by 
$\gamma! = \prod_{0\leq k < \gamma} (\gamma - k)$. 
%In particular, 
%by Stirling's formula, $c^{\sim}(\ca{H}_K^\text{inv}(m,mn,m')) =  \Theta((me)^n/\sqrt{n})$ holds, where $e$ is Napier's constant.
\end{thm}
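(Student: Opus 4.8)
The plan is to combine the reduction established in Theorem \ref{thm:arrangement-group-action} with an upper bound on $|\Ch(\ca{C}_{m,n})|$, where $\ca{C}_{m,n} = \ca{A}_n \cup \ca{B}_{m,n}$. By the remark preceding the statement, after a generic perturbation of the weights and biases we have $c^\sim(\ca{H}_K^{\mathrm{inv}}(m,mn,m'))$ equal to the number of orbits of $\Ch(\ca{B}_{m,n})$ under the $S_n$-action, and Theorem \ref{thm:arrangement-group-action} identifies this number with $|\Ch(\ca{C}_{m,n})|/n!$. So everything reduces to bounding $|\Ch(\ca{C}_{m,n})|$ from above by $(n+\alpha)!/\alpha!$ with $\alpha = 2^{mH(1/m)}$. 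First I would observe that the arrangement $\ca{C}_{m,n}$ lives in $\RR^n$ but its ``essential'' behaviour is $(n-1)$-dimensional, since the Coxeter arrangement $\ca{A}_n$ has the line $\RR\bm{1}$ in every hyperplane, and one checks that the $H_{ij}$ can be intersected with the hyperplane $\sum x_k = 0$ (or quotiented by $\RR\bm{1}$) without changing the chamber count up to the standard factor; more simply, I would bound $|\Ch(\ca{C}_{m,n})|$ by the number of chambers of a generic arrangement of the same cardinality in the relevant dimension.

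The key numerical step: the Coxeter arrangement $\ca{A}_n$ consists of $\binom{n}{2}$ hyperplanes and, being essentially $(n-1)$-dimensional, has exactly $n!$ chambers. Adding the $mn$ hyperplanes of $\ca{B}_{m,n}$ one at a time, each new hyperplane $H$ increases the chamber count by the number of chambers of the induced arrangement on $H$ (Deletion-Restriction / Zaslavsky), which is at most the number of chambers of a generic arrangement of $\le mn$ hyperplanes in dimension $n-1$, namely at most $\sum_{i=0}^{n-1}\binom{mn}{i}$. A cleaner route, better matched to the target bound, is: the whole arrangement $\ca{C}_{m,n}$ has at most $\binom{n}{2} + mn$ hyperplanes in $\RR^n$, so $|\Ch(\ca{C}_{m,n})| \le \sum_{i=0}^{n}\binom{\binom{n}{2}+mn}{i}$; but this is not tight enough. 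Instead I would exploit the product-like structure: $\ca{C}_{m,n}$ refines $\ca{A}_n$, whose $n!$ chambers are the Weyl chambers permuted transitively by $S_n$, so $|\Ch(\ca{C}_{m,n})| = n!\cdot(\text{number of chambers of }\ca{C}_{m,n}\text{ inside one fixed Weyl chamber})$, and inside one Weyl chamber only the $\ca{B}_{m,n}$-hyperplanes cut, giving at most $\sum_{i=0}^{n-1}\binom{mn}{i} \le 2^{mnH(1/m)}$ pieces by \eqref{eq:estimate-linear-region-fully-connected}. Hmm — that would give $n!\cdot 2^{mnH(1/m)}$, not the claimed bound. So the argument must be sharper: the $H_{ij}$ for fixed $i$ are themselves $S_n$-related, so they cut each Weyl chamber in a constrained way. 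I would track that within one Weyl chamber the arrangement $\ca{B}_{m,n}$ behaves like $m$ ``parallel-type'' families, and count chambers as a product $\prod$ or a binomial-type sum that collapses to $\binom{n+\alpha}{n} = (n+\alpha)!/\alpha!n!$ after multiplying by $n!$.

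Concretely, the cleanest path I foresee: show that inside a fixed Weyl chamber $C$ (where $x_1 > x_2 > \dots > x_n$), each hyperplane $H_{ij}$ of the family indexed by $i$ intersects $C$ in a region that, as $j$ varies over $1,\dots,n$, is a nested/ordered family — because the defining equations $b_i(x_1+\dots) + (a_i-b_i)x_j + c_i = 0$ are monotone in $j$ on $C$. Hence the family $\{H_{i1},\dots,H_{in}\}$ restricted to $C$ contributes like a single pencil of $n$ ``ordered'' hyperplanes, i.e.\ adds at most $n+1$ regions, or more precisely the whole arrangement of $m$ such pencils on $C$ has at most $\binom{n+m}{m}$-type chambers; pushing the bookkeeping with the entropy estimate $\sum_{i\le n-1}\binom{mn}{i}\le 2^{mnH(1/m)}$ recast per-pencil yields the per-Weyl-chamber count $\binom{n+\alpha}{n}$ with $\alpha = 2^{mH(1/m)}$. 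Multiplying by $n!$ and dividing by $n!$ (from Theorem \ref{thm:arrangement-group-action}) gives exactly $(n+\alpha)!/(\alpha!\,n!)$. \textbf{The main obstacle} I expect is precisely this last combinatorial claim: controlling how the ``ordered pencil'' structure of the $S_n$-stable families $\{H_{ij}\}_j$ restricts the chamber count inside a single Weyl chamber, and showing it produces the binomial coefficient $\binom{n+\alpha}{n}$ rather than the naive $2^{mnH(1/m)}$ — this is where the Deletion-Restriction induction and the generalized-factorial identity $\binom{n+\alpha}{n} = \prod_{k=0}^{n-1}\frac{n+\alpha-k}{n-k}$ have to be invoked carefully, keeping $\alpha$ real rather than integral.
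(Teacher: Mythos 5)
Your setup coincides with the paper's: after a volume-separating perturbation, $c^{\sim}$ equals the number of $S_n$-orbits of $\Ch(\ca{B}_{m,n})$, and Theorem~\ref{thm:arrangement-group-action} converts this to $|\Ch(\ca{C}_{m,n})|/n!$. The genuine gap is that you never prove the one nontrivial estimate, $|\Ch(\ca{C}_{m,n})|\leq (n+\alpha)!/\alpha! = \prod_{k=1}^n(\alpha+k)$. Your first two attempts you discard yourself (they give at best $n!\,\alpha^n$, hence the orbit bound $\alpha^n$, missing exactly the $n!$-type saving that is the content of the theorem), and the third attempt --- the ``ordered pencil'' structure of each family $\{H_{i1},\dots,H_{in}\}$ inside a fixed Weyl chamber --- ends with ``pushing the bookkeeping \dots yields $\binom{n+\alpha}{n}$'', which you yourself flag as the main obstacle. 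The geometric observation behind it is fine (for fixed $i$, $H_{ij_1}\cap H_{ij_2}$ lies in the wall $x_{j_1}=x_{j_2}$, so a single family does not self-intersect inside an open Weyl chamber), but it does not by itself produce $\binom{n+\alpha}{n}$: counting chambers cut out of an $n$-dimensional cone by $m$ mutually generic non-crossing families of $n$ hyperplanes leads to bounds of the shape $\sum_{k\leq n}\binom{m}{k}n^{k}$, and there is no evident way to ``recast the entropy estimate per pencil'' so that the quantity $\alpha=2^{mH(1/m)}$, which bounds chamber counts of whole arrangements $\ca{B}_{m,n'}$ via $c^{0}_{n'}\leq\alpha^{n'}$, attaches to a single family. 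So the proposal is incomplete precisely at the heart of the proof.

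For comparison, the paper reaches the bound by a different induction: it adjoins the Coxeter hyperplanes to $\ca{B}_{m,n}$ index by index, sets $c^{k}_{n}=|\Ch(\ca{A}_k\cup\ca{B}_{m,n})|$, and Deletion--Restriction gives the recurrence $c^{k}_{n}=c^{k-1}_{n}+k\,c^{k-1}_{n-1}$, in which the restricted arrangements live one dimension lower and carry one fewer Coxeter index. Solving the recurrence gives $c^{n}_{n}=\sum_{l=0}^{n}e_{l}(1,\dots,n)\,c^{0}_{n-l}$ with $e_l$ the elementary symmetric polynomials, the entropy bound enters only through $c^{0}_{n-l}\leq\alpha^{n-l}$, and the sum factors algebraically as $\prod_{k=1}^{n}(\alpha+k)=(n+\alpha)!/\alpha!$; dividing by $n!$ finishes the proof. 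If you want to rescue your per-Weyl-chamber route, you would need an analogous recurrence for the count inside one chamber, which in effect reproduces this induction rather than avoiding it.
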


%\section{Proof of Theorem~\ref{thm:complexity-invariant-shallow}} \label{sec:proof-complexity-invariant-shallow}

\begin{proof}
%[Proof of Theorem~\ref{thm:complexity-invariant-shallow}]
We set $c^k_{n}$ as the numbers of the chambers of the hyperplane 
arrangement $\ca{C}^k_{m,n} = \ca{A}_k\cup \ca{B}_{m,n}$ for 
\[
 \ca{A}_k = \{ W_{ij} \mid 1\leq i<j \leq k \}. 
\]
This $\ca{A}_k$ can be regarded as the Coxeter arrangement for $S_k$. 
Using this notation, it is straightforward to demonstrate that 
$c^k_n$ satisfies the following recurrence relation: 
\[
 c^k_n = c^{k-1}_n + k c^{k-1}_{n-1}. 
\]
Using this relation, we have 
\[
 |\Ch(\ca{C}_{m,n})| = c^n_n = \sum_{l=0}^n \left( \sum_{1\leq k_1 < \cdots < k_l \leq n} k_1\cdots k_l \right) c^0_{n-l}. 
\]
If we use the upper bound of 
$c^0_{n-l} \leq \alpha^{n-l}$, 
where $\alpha = 2^{mH(1/m)}$ as in 
\eqref{eq:estimate-linear-region-fully-connected}, we have 
\begin{align*}
 |\Ch(\ca{C}_{m,n})| &\leq \sum_{l=0}^n \left( \sum_{1\leq k_1 < \cdots < k_l \leq n} k_1\cdots k_l \right) \alpha^{n-l}  \\ 
 &= \prod_{k = 1}^n (\alpha + k) = \frac{(n+\alpha)!}{\alpha!}. 
\end{align*}
%Here, $\alpha!$ for positive real number $\alpha$ is the 
%generalized binomial coefficients defined by 
%$\alpha! = \prod_{0\leq k < \alpha} (\alpha - k)$. 
Hence, by combining this and Theorem~\ref{thm:arrangement-group-action}, the number of orbits of $\Ch(\ca{B}_{m,n})$ is bounded from above as 
\begin{align*}
 \text{(the number of orbits of }\Ch(\ca{B}_{m,n})\text{)} &= 
 \frac{|\Ch(\ca{C}_{m,n})|}{n!} \\ &\leq \ \frac{(n+\alpha)!}{\alpha! n!}. \qedhere
% \\ 
% &\leq 
% \frac{(n+\alpha)(n+\alpha - 1) \cdots (n+\alpha - \lfloor \alpha \rfloor )}{\alpha!}. 
\end{align*}
%In the second inequality, we used the fact that 
%$n+\alpha -\lfloor\alpha \rfloor -k \leq n-k + 1$. 
%By this argument, the measure of complexity 
%$c^{\sim}(\ca{H}_K^{\text{inv}}(m,mn,n'))$ of the set of the 
%permutation invariant shallow models is bounded from above by a 
%polynomial with respect to $n$ of degree 
%$\lfloor \alpha \rfloor + 1 $. 
\end{proof}

\subsection{Comparison of the measures between fully connected and permutation invariant models}   

We compare these complexities between fully connected shallow model 
and permutation invariant shallow model. To equalize the number of hidden units in both models, we consider $n_0 = n$ and $n_1 = mn$. 
Let $K$ be a connected compact $n$-dimensional subset 
$K\subset \RR^n$ which is stable by permutation action. 
Then, because the maximum number of equivalent classes for fully connected shallow 
models is bounded from below by $\alpha^n/{\sqrt{8n(1-1/m)}}$ as in 
\eqref{eq:estimate-linear-region-fully-connected}, where $\alpha = 2^{mH(1/m)}$. 
This means that the measure of complexity increases exponentially when 
$n$ increases.  
Meanwhile, by Theorem~\ref{thm:complexity-invariant-shallow}, 
the maximum number of equivalent classes for permutation 
invariant shallow models is bounded from above by 
\[
\frac{(n+\alpha)!}{\alpha!n!} \leq \frac{(n+\alpha)(n+\alpha - 1) \cdots (n+\alpha - \lfloor \alpha \rfloor )}{\alpha!}. 
\] 
In the second inequality, we used the fact that 
$n+\alpha -\lfloor\alpha \rfloor -k \leq n-k + 1$. 
By this argument, the measure of complexity 
$c^{\sim}(\ca{H}_K^{\text{inv}}(m,mn,n'))$ of the set of the 
permutation invariant shallow models is bounded from above by a 
polynomial with respect to $n$ of degree 
$\lfloor \alpha \rfloor + 1 $. By comparing these measures, we have 
\begin{align*}
c^{\sim}(\ca{H}_K^{\mathrm{inv}}(n, mn, n')) &\leq 
\frac{(n+\alpha)!}{\alpha!n!} \\ &\ll 
  \frac{\alpha^n}{\sqrt{8n(1- 1/m)}} \\ 
  &\leq 
  c^{\sim}(\ca{H}_K^{\mathrm{full}}(n, mn, n')).  
\end{align*}
In particular, 
$c^{\sim}(\ca{H}_K^{\text{inv}}(m,mn,n'))$ is strictly smaller than 
$c^{\sim}(\ca{H}_K^{\mathrm{full}}(m,mn,n'))$.  
Therefore, the proposed complexity behaves better to evaluate expressive power than simply counting linear regions. %\ytnote{rewrite}

\section{Specific deeper models}
\label{sec:deeper-models}

In this section, we provide a variant of the model which has 
been introduced by \citet{montufar2014number} 
and show that this can be used to confirm that deep models can have much higher complexity than shallow models. 

\subsection{A variant of the model of Mont\'{u}far et al} 
\label{sec:deeper-fully-connected}
We here introduce a variant of the model of %Mont\'{u}far et al 
\citet{montufar2014number}. 
%The original model introduced by Mont\'{u}far et al \citep{montufar2014number} is a deep neural network model 
%defined by specific affine maps to 
The original model introduced by %Mont\'{u}far et al 
\citet{montufar2014number} 
is a deep neural network defined by some special affine maps designed to cause ``folding'' efficiently. 
From the way it is constructed, the hidden layers divide the input space into a grid of hypercubes, and the division into linear regions produced by the output layer is copied into each hypercube.
We modify this model to be able to control the lengths of sides of the hypercubes to obtain hypercuboids which have different volumes.    
%The model can control the volumes of hypercubes. 

%The model is defined  
%We consider two consecutive layer $\RR^n$ and $\RR^{n_1}$. 
%We assume that $n \leq n_1$. 
%Let $p = \lfloor n_1/ n \rfloor = \max\{ m \in \ZZ \mid m \leq n_1/n \}$. 
%For $j \in \{1, 2, \dots, n \}$, 
%we set $\bm{w}_j^\top = (0, \dots, 0, 1 , 0, \dots, 0)$ as the 
%vector $\bm{w}_j \in \RR^{n}$ whose $j$-th entry is $1$ and 
%the others are $0$. 
%Let $a^{j,(l)}_1, \dots, a^{j,(l)}_p$ be positive numbers such that 
%$\sum_{k=1}^p a_k^{j,(l)} = 1$. We set $b_k^{j,(l)} = (a_k^{j,(l)})^{-1}$ and for $k\geq 2$, 
%%\ytnote{write the def of $c^{j,(l)}_k$}
%\[
% c^{j,(l)}_k = \begin{cases}
% 	-b_k^{j,(l)} (a_1^{j,(l)} + \cdots + a_k^{j,(l)}) & \text{if } k \text{ is even},\\ 
% 	-b_k^{j,(l)} (a_1^{j,(l)} + \cdots + a_{k-1}^{j,(l)}) & \text{if } k \text{ is odd}.
% \end{cases}
%\]
%For $j \in \{1, 2, \dots, n \}$ and $k \in \{1, \dots, p \}$, 
%we define the function 
%$h^{j,(l)}_k \colon \RR^n \to \RR$ as 
%\[
% h^{j,(l)}_k(\bm{x}) = \begin{cases} 
% 	\max\{ 0, b^{j,(l)}_1\bm{w}_j^\top \bm{x} \} & \text{if} \ k = 1, \\
%  \max\{ 0, (b^{j,(l)}_{k-1}+b_{k}^{j,(l)})\bm{w}_j^\top \bm{x} + \sum_{i=2}^k c^{j,(l)}_i \} & k=2, 3, \dots, p.  
% \end{cases}\label{eq:h^{j,(l)}_k}
%\]

The model is defined as follows: 
%We build deeper model stacking the shallow model constructed above. 
We consider a neural network of depth $L+1$ and width as \eqref{eq:deep-fully-connected}.  
%which defined by 
%input layer, $L$ hidden layers, and output layer. 
%Let $n=n_0$ be the input dimension, 
%$n_i$ the dimension of $i$-th hidden layer, and $n_{L+1}$ the 
%output dimension. 
We assume that $n \leq n_l$ for any $l$ and 
set $p_l = \lfloor n_l/n \rfloor$. 
For $j \in \{1, 2, \dots, n \}$, 
we set $\bm{w}_j^\top = (0, \dots, 0, 1 , 0, \dots, 0)$ as the 
vector $\bm{w}_j \in \RR^{n}$ whose $j$-th entry is $1$ and 
the others are $0$. 
For $l = 1, 2, \dots, L-1$, 
we define $\wt{h}^{(l)} \colon \RR^n \to \RR^n$ as follows. 
We take, for any $j= 1, 2, \dots, n$, 
positive integers $a_1^{j, (l)}, \dots, a_{p_l}^{j, (l)}$ satisfying 
$\sum_{k=1}^{p_l} a_k^{j, (l)} =1$ and set 
$b_k^{j, (l)} = (a_k^{j, (l)})^{-1}$ and 
\[
 c^{j, (l)}_k = \begin{cases}
 	-b_k^{j, (l)} (a_1^{j, (l)} + \cdots + a_k^{j, (l)}) & \text{if } k \text{ is even},\\ 
 	-b_k^{j, (l)} (a_1^{j, (l)} + \cdots + a_{k-1}^{j, (l)}) & \text{if } k \text{ is odd}.
 \end{cases}
\]
For $j = 1, 2, \dots, n $ and $k = 1, \dots, p_l $, 
we define the function 
$h^{j, (l)}_k \colon \RR^n \to \RR$ as 
\begin{align*}
 &h^{j, (l)}_k(\bm{x}) = \\ 
 &\begin{cases} 
 	\max\{ 0, b^{j, (l)}_1\bm{w}_j^\top \bm{x} \} & \text{if} \ k = 1, \\
  \max\{ 0, (b^{j, (l)}_{k-1}+b^{j, (l)}_k)\bm{w}_j^\top \bm{x} + \sum_{s=2}^k c^{j, (l)}_s \} & \text{if} \ k\geq 2.  
 \end{cases}
\end{align*}
%$\wt{h}^{j, (l)}$ and $\wt{h}^{(l)}$ are also defined as \eqref{eq:h^{j,(l)}_k} and \eqref{eq:h^{j,(l)}} respectively. 

%in the same manner. 
%\ytnote{define $c^{j, (l)}_s$}

Using these $h^{j,(l)}_k$, we define the map 
$\wt{h}^{j,(l)} \colon \RR^n \to \RR$ as 
\begin{align}
 \wt{h}^{j,(l)} (\bm{x}) &= 
% \begin{pmatrix}
% 	1 & -1 & \cdots & (-1)^{p-1}
% \end{pmatrix}
% 	(1, -1, \cdots, (-1)^{p-1})
% \begin{pmatrix}
% 	h^{j,(l)}_1(\bm{x}) \\ \vdots \\ h^{j,(l)}_p(\bm{x})
% \end{pmatrix} 
% = 
 \sum_{k = 1}^p (-1)^{k-1} h^{j,(l)}_k(\bm{x}). \label{eq:h^{j,(l)}} 
% \\ 
% &=  \max\{0, x_j \} - \max\{ 0, 2x_j -1\} 
% % + \max\{0,2x_j -2 \} - 
% + \cdots + (-1)^{p-1} \max\{ 0, 2x_j - (p-1) \} 
\end{align}
%By the definition, 
Then, as in Figure~\ref{fig:1}, for $\bm{x}\in \RR^n$ such that 
$a_1^{j,(l)} + \dots + a_{i-1}^{j,(l)} \leq x_j < a_1^{j,(l)} + \dots + a_i^{j,(l)}$, 
$\wt{h}^{j,(l)}(\bm{x})$ satisfies 
\[
 \wt{h}^{j,(l)} (\bm{x}) = (-1)^{i+1} (b^{j,(l)}_i x_j + c^{j,(l)}_i).   
\]
%as in Figure~\ref{fig:1}. 
%\begin{figure}
%  \centering
%  \includegraphics[width=5cm]{h-tilde.eps}
%  \caption{}
%\end{figure}
\begin{figure}[t]
%    \begin{tabular}{cc}
      %---- 最初の図 ---------------------------
 %     \begin{minipage}[t]{0.45\hsize}
        \centering
        \includegraphics[keepaspectratio, scale=1.6]{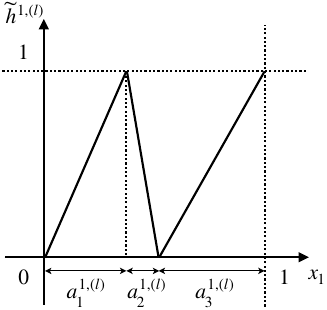}
        \caption{The graph of $\wt{h}^{1,(l)}$ for $p=3$}
        \label{fig:1}
 %     \end{minipage} 
      %&
      %---- 2番目の図 --------------------------
%      \begin{minipage}[t]{0.45\hsize}
%        \centering
%        \includegraphics[keepaspectratio, scale=1.6]{grid-shallow.pdf}
%        \caption{A decomposition of $[0,1]^2$ into rectangles ($2$-dim hypercuboids) by $\wt{h}^{(l)}$ for $n=2$, $p=3$}
%        \label{fig:2}
%      \end{minipage}
            %---- 2番目の図 --------------------------
%      \begin{minipage}[t]{0.45\hsize}
%        \centering
%        \includegraphics[keepaspectratio, scale=0.5]{grid-deep.eps}
%        \caption{3}
%        \label{fig:3}
%      \end{minipage}
      %---- 図はここまで ----------------------
%    \end{tabular}
  \end{figure}
\begin{figure}[t]
%    \begin{tabular}{cc}
      %---- 最初の図 ---------------------------
%      \begin{minipage}[t]{0.45\hsize}
%        \centering
%        \includegraphics[keepaspectratio, scale=1.6]{h-tilde.pdf}
%        \caption{The graph of $\wt{h}^{1,(l)}$ for $p=3$}
%        \label{fig:1}
%      \end{minipage} &
      %---- 2番目の図 --------------------------
 %     \begin{minipage}[t]{0.45\hsize}
        \centering
        \includegraphics[keepaspectratio, scale=1.6]{}
        \caption{A decomposition of $[0,1]^2$ into rectangles ($2$-dim hypercuboids) by $\wt{h}^{(l)}$ for $n=2$, $p=3$}
        \label{fig:2}
 %     \end{minipage}
            %---- 2番目の図 --------------------------
%      \begin{minipage}[t]{0.45\hsize}
%        \centering
%        \includegraphics[keepaspectratio, scale=0.5]{grid-deep.eps}
%        \caption{3}
%        \label{fig:3}
%      \end{minipage}
      %---- 図はここまで ----------------------
%    \end{tabular}
  \end{figure}

We remark that although the input space of $\wt{h}^{j,(l)}$ is $\RR^n$, 
this map depends only on $j$-th entry of $\bm{x}$. Hence, 
we can regard this as %$\RR \to \RR$.
from $\RR$ to $\RR$. 
Moreover, this map $\wt{h}^{j,(l)}$ divides the subinterval $[0,1]$ of 
$x_j$-axis into
 $p_l$ regions $(-\infty, 0]$, $[0, a_1^{j,(l)}]$, $[a_1^{j,(l)}, a_1^{j,(l)} + a_2^{j,(l)}]$, $\dots$, 
$[\sum_{i =1}^{p_l-1}a_i^{j,(l)}, \infty)$ and the image of each regions by $\wt{h}^{j,(l)}$ is $[0,1]$. 
This construction makes a $p_l$-fold ``folding''.

We define $\wt{h}^{(l)}\colon \RR^n \to \RR^{n}$ by  
$\wt{h}^{(l)} = (\wt{h}^{1,(l)}, \dots, \wt{h}^{n,(l)})^\top $. 
%\[
% \wt{h} = \begin{pmatrix}
% 	\wt{h}^1 \\ \vdots \\ \wt{h}^n 
% \end{pmatrix}. 
%\]
By the construction, this map $\wt{h}^{(l)}$ can be 
realized as a ReLU neural network as 
\begin{align*}
 &\RR^n \to \RR^{n_l} \to \RR^n;\\ 
& \bm{x} \mapsto 
 (h_1^{1,(l)}(\bm{x}), \dots, h^{n,(l)}_{p_l}(\bm{x}), 0, \dots, 0)^\top \\ &\mapsto 
 (\wt{h}^{1,(l)}(\bm{x}), \dots, \wt{h}^{n,(l)}(\bm{x}))^\top. 
% \begin{pmatrix}
% 	h^1_1(\bm{x}) \\ \vdots \\ h^n_p(\bm{x}) \\ 
% 	0 \\ \vdots \\ 0 
% \end{pmatrix}
% \mapsto 
% \begin{pmatrix}
% 	\wt{h}^1(\bm{x}) \\ \vdots \\ \wt{h}^n(\bm{x}) 
% \end{pmatrix}. 
\end{align*}
This map $\wt{h}^{(l)}$ divides $[0,1]^n \subset \RR^n$ into 
$p_l^n$ $n$-dimensional hypercuboids. 
We remark that the volume of the $(i_1, \dots, i_n)$-th 
hypercuboid is $a^{1,(l)}_{i_1}a^{2,(l)}_{i_2} \cdots a_{i_n}^{n,(l)}$ as in Figure~\ref{fig:2}. 
%\ytnote{explain the meaning of ``-th''} 
%\ytnote{upper index is very confusing. change the notation}

Then, the composition 
%$\wt{h}^{(L)} \circ 
$\wt{h}^{(L-1)} \circ \cdots  \circ 
\wt{h}^{(1)}$ defines the deep neural network of depth $L$, 
width $n_0, n_1, \dots, n_{L}$, and output $\RR^n$. 
This map sends $[0,1]^n \in \RR^{n}$ to 
$[0,1]^n \subset \RR^n$ and divides $[0,1]^n$ into the 
$(p_1 p_2 \cdots p_{L-1})^n$ $n$-dimensional hypercubes with 
linear regions as in Figure~\ref{fig:3}.  
Then, the volume of $(\ol{i}_1, \ol{i}_2, \dots, \ol{i}_n)$-th hypercube is 
\begin{align*}
 \left(a_{i_{1,L-1}}^{1, (L-1)} \cdots a_{i_{11}}^{1, (1)}\right)
 &\cdot
 \left(a_{i_{2,L-1}}^{2, (L-1)} \cdots a_{i_{21}}^{2, (1)}\right)\cdot \\  &\cdots \cdot 
 \left(a_{i_{n,L-1}}^{n, (L-1)} \cdots a_{i_{n1}}^{n, (1)}\right), 
\end{align*}
where 
$\ol{i}_k = (i_{k1}, \dots, i_{k,L-1}) \in \prod_{l=1}^{L-1}
\{1, \dots, p_l \}$. 
% \ytnote{draw a picture} 
%\begin{figure}[t]
%   % \begin{tabular}{cc}
%      %---- 最初の図 ---------------------------
%      %\begin{minipage}[t]{0.45\hsize}
%        \centering
%        \includegraphics[keepaspectratio, scale=1.4]{grid-deep.pdf}
%        \caption{A grid decomposition of $[0,1]^2$ into rectangles by $\wt{h}^{(2)}\circ \wt{h}^{(1)}$ for $n=2, p_1=p_2 =3$ and an image of copies of hyperplane arrangement in $[0,1]^2$ 
%        by $F$ into the rectangles}
%        \label{fig:3}
%      %\end{minipage}
%      %---- 図はここまで ----------------------
%   % \end{tabular}
%  \end{figure}
\begin{figure*}[t]
        \centering
        \includegraphics[keepaspectratio, scale=1.4]{}
        \caption{A grid decomposition of $[0,1]^2$ into rectangles by $\wt{h}^{(2)}\circ \wt{h}^{(1)}$ for $n=2, p_1=p_2 =3$ and an image of copies of hyperplane arrangement in $[0,1]^2$ 
        by $F$ into the rectangles}
        \label{fig:3}
\end{figure*} 

In particular, by perturbing weights if necessary, 
we may assume that any hypercuboids have different volumes. 

Next, we choose a map $F\colon \RR^n \to \RR^{n_{L}}$ 
which gives a hyperplane arrangement whose chambers have different volumes introduced in Section~\ref{sec:complexity-shallow-full} 
and by scaling, we assume that all the intersections of 
hyperplanes are in the interior of the hypercube 
$[0,1]^n \subset \RR^n$. 
Finally, we take the composition 
$\wt{h}^{(L-1)} \circ \cdots  \circ 
\wt{h}^{(1)}$ with $F$. 
Then, the hyperplanes arrangement in $[0,1]^n$ defined by $F$ 
is copied into each hypercuboids as in Figure~\ref{fig:3}. 
If we need, by perturbing weights again, 
we may assume that any linear region has different volume. 
This implies that the measure of complexity $c^{\sim}(F\circ\wt{h}^{(L-1)} \circ \cdots  \circ \wt{h}^{(1)})$ coincides with the maximum of the 
number of linear regions. 
In particular, this is equal to 
\[
 \prod_{i=1}^{L-1} \left(\left\lfloor \frac{n_i}{n}\right\rfloor\right)^n \left( \sum_{k = 0}^{n} 
 {	n_{L} \choose  k} \right). 
\]
% This shows that the measure of complexity 
% $c^{\sim}(\ca{H}^{\mathrm{full}}(n_0, n_1, \dots, n_L, n_{L+1}))$ 
% for the models of above ReLU deep neural networks is bounded from below by 
% $ \prod_{i=1}^{L-1} \left(\left\lfloor \frac{n_i}{n}\right\rfloor\right)^n \left( \sum_{k = 0}^{n} 
%  {	n_{L} \choose  k} \right)$.
This shows the following: 
\begin{thm}
The measure of complexity $c^{\sim}(\ca{H}_{[0,1]^n}^{\mathrm{full}}(n_0, n_1, \dots, n_L, n_{L+1}))$ 
for the model of above defined ReLU deep neural networks is bounded from below by 
$ \prod_{i=1}^{L-1} \left(\left\lfloor \frac{n_i}{n}\right\rfloor\right)^n \left( \sum_{k = 0}^{n} 
 {	n_{L} \choose  k} \right)$.
\end{thm}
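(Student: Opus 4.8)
The plan is simply to exhibit one network in $\ca{H}_{[0,1]^n}^{\mathrm{full}}(n_0,\dots,n_L,n_{L+1})$ whose complexity $c^{\sim}$ equals the claimed product; since $c^{\sim}$ of the class is by definition the maximum over the class, this gives the lower bound. The witness is precisely the composition $F\circ\wt{h}^{(L-1)}\circ\cdots\circ\wt{h}^{(1)}$ constructed above, with all weights put in generic position, and the argument is an assembly of facts already in hand: the grid structure of the $\wt{h}^{(l)}$'s, the chamber count for a shallow arrangement from Section~\ref{sec:complexity-shallow-full}, and the volume remark at the end of Section~\ref{sec:our-measure}.

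\textbf{Step 1 (counting linear regions).} By construction $\wt{h}^{(L-1)}\circ\cdots\circ\wt{h}^{(1)}$ maps $[0,1]^n$ onto $[0,1]^n$ and cuts $[0,1]^n$ into $\prod_{i=1}^{L-1}\lfloor n_i/n\rfloor^n$ axis-aligned hypercuboids; on each hypercuboid $C$ the composition restricts to an affine bijection onto $[0,1]^n$ with diagonal linear part, so that its Jacobian has absolute value $1/\vol(C)$, while $\wt{h}^{(l)}$ genuinely bends at each face of the $C$'s (the slopes on adjacent monotone pieces have opposite signs). Composing with $F$ — chosen, as in Section~\ref{sec:complexity-shallow-full}, so that its hyperplane arrangement has $\sum_{k=0}^{n}\binom{n_L}{k}$ chambers, all of pairwise distinct volume, and scaled so that every intersection of its hyperplanes lies in the interior of $[0,1]^n$ — this arrangement is pulled back into each $C$; since a fold is affine on each of its monotone pieces and an affine bijection carries linear regions to linear regions, the composition has exactly $\prod_{i=1}^{L-1}\lfloor n_i/n\rfloor^n\cdot\sum_{k=0}^{n}\binom{n_L}{k}$ linear regions.

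\textbf{Step 2 (distinct volumes $\Rightarrow$ singleton classes).} A linear region of the composition is the preimage in some hypercuboid $C$ of a chamber $\Delta$ of the $F$-arrangement, so by the Jacobian computation of Step~1 its volume is $\vol(C)\,\vol(\Delta)$. The numbers $\vol(C)$ are monomials in the folding parameters $a^{j,(l)}_k$ and the $\vol(\Delta)$ are (piecewise) rational functions of the parameters of $F$; for each of the finitely many pairs of distinct linear regions, equality of their volumes cuts out a proper algebraic subset of parameter space, so a generic choice of all weights makes the volumes of the linear regions pairwise distinct. By the remark preceding Section~\ref{subsec:examples-1D-body}, a Euclidean transformation preserves volume, hence two linear functions whose regions have different volume are never $\sim$-equivalent; every equivalence class is a singleton, $c^{\sim}=c^{\#}$ for this network, and together with Step~1 this gives $c^{\sim}(\ca{H}_{[0,1]^n}^{\mathrm{full}}(n_0,\dots,n_L,n_{L+1}))\ge\prod_{i=1}^{L-1}\lfloor n_i/n\rfloor^n\cdot\sum_{k=0}^{n}\binom{n_L}{k}$.

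\textbf{Main obstacle.} The one genuinely delicate point is making the genericity of Step~2 watertight: one must rule out that a volume coincidence is \emph{forced} (by symmetry between two hypercuboids, or by two congruent chambers of the $F$-arrangement). This is why $F$ is first selected so that its chambers already have pairwise distinct volumes, and why the $a^{j,(l)}_k$ are then chosen so that the products $\vol(C)$ are pairwise distinct and multiplicatively independent of the finitely many ratios $\vol(\Delta)/\vol(\Delta')$ — again a full-measure condition, leaving only proper subvarieties to avoid. Everything else — the combinatorial count, the piecewise affineness of the folds, and the volume-invariance of Euclidean maps — is routine once this is arranged.
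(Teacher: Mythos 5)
Your proposal is correct and follows essentially the same route as the paper: exhibit the witness $F\circ\wt{h}^{(L-1)}\circ\cdots\circ\wt{h}^{(1)}$, count $\prod_{i=1}^{L-1}\lfloor n_i/n\rfloor^n\sum_{k=0}^{n}\binom{n_L}{k}$ linear regions from the hypercuboid grid with the $F$-arrangement copied into each cell, and use a generic-weights perturbation to make all region volumes pairwise distinct so that, since Euclidean transformations preserve volume, $c^{\sim}=c^{\#}$ for this network. Your Jacobian bookkeeping and the explicit proper-subvariety genericity argument merely flesh out the paper's ``by perturbing weights if necessary'' step.
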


%As in \citep{montufar2014number}, we compare the complexities between 
% shallow and deeper models. 
%Let $m$ be the number of hidden units. Then, 
%as mentioned in Section~\ref{sec:complexity-shallow-full}, 
%the maximum of the complexity of ReLU shallow neural networks of 
%$m$-dimensional hidden layer is 
%$\sum_{k = 0}^{n} { m \choose k }$. This means that 
%the order of the maximum of complexity is $O(m^n)$. 
%On the other hand, 

As a consequence of the arguments of Section~\ref{sec:complexity-shallow-full} and this section, both of 
the complexities for fully connected models which 
appear there are same as maximum numbers of linear 
regions. Hence, by similar argument to \citet{montufar2014number}, 
the complexity of deeper models is exponentially larger than 
the shallow models. 
This also shows the benefit of depth for neural network.

%In the shallow case, by the well-known result on hyperplane arrangements by Zaslavsky \citep{zaslavsky1975facing}, 
%ReLU shallow network $\RR^n \to \RR^{n_L}$ can divide 
%the input space $\RR^n$ into 
%\[
% \sum_{k=0}^n 
%{ n_L \choose 
% 	 k} 
%\]
%linear regions. 
%For deeper case, by combining the above constructions, 
%we can construct $\RR^n \to \RR^{n_1} \to \RR^{n_2} \to \cdots 
%\to \RR^{n_L}$ which can divide the input space into 
%\[
% \prod_{i=1}^{L-1} \left(\left\lfloor \frac{n_i}{n}\right\rfloor + 1\right)^n \left( \sum_{k = 0}^{n} 
% {	n_L \choose  k} \right). 
%\]

%This implies that the deeper models can divide the input space 
%into more number of linear regions that the shallow case. 
%%\ytnote{Indeed, we need to shrink the intervals for each layers}
%
%We can show that this model introduced Mont\'{u}far et al is 
%$S_n$-equivariant. We use this property to evaluate our 
%complexity later. 

%\subsection{Comparison}

\subsection{A benefit of depth for deep set models} 

%We calculate the complexity $c^{\sim}(\ca{H})$ for some 
%explicit hypothesis sets. 
%Roughly speaking, let $\ca{H}_{\text{deep}}$ and 
%$\ca{H}_{\text{shallow}}$ 
%be the set of ReLU deep neural networks and the set of ReLU 
%shallow neural networks respectively. 
%We want to show that when we assume that 
%$\ca{H}_{\text{deep}}$ and $\ca{H}_{\text{shallow}}$ have the same 
%number of nodes, 
%$c^{\sim}(\ca{H}_{\text{shallow}}) \leq c^{\sim}(\ca{H}_{\text{deep}})$ holds. 
%This indicates that deeper models are more effective to achieve a complexity than the shallow model. 

We here consider a permutation-invariant deep model, called 
{\em deep set model} introduced by \citet{zaheer2017deep}. 
This model is made by stacking some permutation equivariant maps and 
one invariant map. Thus, the obtained map is permutation-invariant.   
This model has some common features with the model of 
%Mont\'{u}far et al 
\citet{montufar2014number}. 
Indeed, the original model of Mont\'{u}far et al, except for the map from the last hidden layer to the output layer, is equivalent to the 
deep set model. 
We shall modify the model variant introduced in Section~\ref{sec:deeper-fully-connected} to be a deep set model 
and show that deep set models also have a similar benefit of depth.  
%We shall show that a similar result for deep set models 
%as the fully connected models in Section~\ref{sec:deeper-fully-connected}. 

As mentioned above, the deep set model is defined by stacking permutation 
equivariant affine maps and one invariant map. 
More specifically, the ReLU deep neural network 
$f_{L+1} \circ \ReLU \circ f_{L} \circ  \cdots \circ \ReLU \circ f_1$ for affine maps $f_i \colon (\RR^{n})^{m_{i-1}} \to 
(\RR^{n})^{m_i}$ is called {\em a deep set model} if $f_1, \dots, f_{L}$ are permutation equivariant and 
$f_{L+1}\colon (\RR^n)^{m_L} \to \RR^{m_{L+1}}$ is permutation-invariant. 
For $\ul{m} =(m_1, m_2, \dots, m_L, m_{L+1})$, let $\ca{H}_{[0,1]^n}^\text{inv}(n, \ul{m})$ 
be the set of the restrictions to $[0,1]^n$ of the %above 
deep set models. 

If we assume that the variant of model of \citet{montufar2014number} which we introduced  
in Section~\ref{sec:deeper-fully-connected} satisfies that 
$ h^{1,(l)}_k = \cdots =  h^{n,(l)}_k$ for any $k$ and any $l$, and 
that $F\colon (\RR^n)^{m_L} \to \RR^{m_{L+1}}$ is 
permutation-invariant, then the obtained neural network 
$F\circ \wt{h}^{(L-1)}\circ \cdots \circ\wt{h}^{(1)}$ is in 
$\ca{H}_{[0,1]^n}^\text{inv}(n,\ul{m})$. 
In this case, $a_{k}^{1, (l)}= \cdots = a_{k}^{n,(l)}$ holds for any $k$ and $l$. We set $a_{k}^{(l)}$ to be this number. 
The obtained neural network providing the 
$(\prod_{i=1}^{L-1} p_l)^n$ $n$-dimensional hypercuboids. However, 
the volume of $(\ol{i}_1, \ol{i}_2, \dots, \ol{i}_n)$-th hypercuboid 
 is 
%\begin{align*}
% \left(a_{i_{1,L-1}}^{(L-1)} \cdots a_{i_{11}}^{(1)}\right) &\cdot
% \left(a_{i_{2,L-1}}^{(L-1)} \cdots a_{i_{21}}^{(1)}\right) \cdot 
% \\ &\cdots 
% \left(a_{i_{n,L-1}}^{(L-1)} \cdots \cdot a_{i_{n1}}^{(1)}\right), 
%\end{align*}
{\small 
\begin{align*}
 \left(a_{i_{1,L-1}}^{(L-1)} \cdots a_{i_{11}}^{(1)}\right) \cdot
 \left(a_{i_{2,L-1}}^{(L-1)} \cdots a_{i_{21}}^{(1)}\right) \cdots 
 \left(a_{i_{n,L-1}}^{(L-1)} \cdots \cdot a_{i_{n1}}^{(1)}\right),
\end{align*}
}
where 
$\ol{i}_k = (i_{k,1}, \dots, i_{k,L-1}) \in \prod_{l=1}^{L-1}
\{1, \dots, p_l \}$. 
We regard the index set 
$\prod_{l=1}^{L-1}\{1, \dots, p_l \}$ as an ordered set 
by the lexicographic order $\leq$. 
Then, by perturbing the weights or biases if we need, 
we may assume that 
any hypercuboid in the set of hypercuboids whose index 
$(\ol{i}_1, \ol{i}_2, \dots, \ol{i}_n)$ satisfies 
$\ol{i}_1 < \ol{i}_2 < \dots < \ol{i}_n$ have different volumes, 
and the number of such hypercuboids is 
${p_1\cdots p_{L-1} \choose n}$. 
%As mentioned in Section~\ref{sec:deeper-fully-connected}, 
We choose the affine map from $\RR^n$ to output layer 
$(\RR^n)^{m_{L}}$ to be the one which achieves the measure of 
complexity of $\ca{H}_{[0,1]^n}^\text{inv}(n, nm_{L}, m_{L+1})$ as in 
Section~\ref{sec:complexity-shallow-invariant}. 
Hence, the measure of complexity of $\ca{H}^\text{inv}(n,\ul{m})$ is 
bounded from below by 
$C\cdot(m_1\cdots m_{L})^n n^n/n!$ for a positive constant $C$. In particular, the following holds: 
\begin{thm} The following holds: 
 $c^{\sim}(\ca{H}_{[0,1]^n}^{\mathrm{inv}}(n, \ul{m})) = 
 \Omega((m_1\cdots m_{L})^n (n^n/n!))=\Omega(
 (m_1\cdots m_L e)^n /\sqrt{n})$.
\end{thm}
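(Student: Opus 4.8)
The plan is to exhibit a single deep set model $G \in \ca{H}^{\mathrm{inv}}_{[0,1]^n}(n,\ul{m})$ whose complexity can be bounded below, by composing the symmetrised ``folding'' network of Section~\ref{sec:deeper-fully-connected} with a shallow invariant network, and then by forcing as many linear regions of $G$ as possible to have pairwise distinct volumes. The guiding principle is the remark after Definition~\ref{def of inv}: a Euclidean transformation preserves volume, so two linear regions of different volume automatically lie in different $\sim$-classes, and it therefore suffices to count distinct volumes.

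Concretely, I would take $G = F \circ \wt h^{(L-1)} \circ \cdots \circ \wt h^{(1)}$, where each $\wt h^{(l)}$ is the folding map of Section~\ref{sec:deeper-fully-connected} subject to the symmetry constraint $h^{1,(l)}_k = \cdots = h^{n,(l)}_k$ (so $a^{1,(l)}_k = \cdots = a^{n,(l)}_k =: a^{(l)}_k$), which makes $\wt h^{(l)}$ permutation equivariant, and $F \colon \RR^n \to \RR^{m_{L+1}}$ is a shallow invariant ReLU network with hidden layer $(\RR^n)^{m_L}$ chosen to attain $c^\sim(\ca{H}^{\mathrm{inv}}_{[0,1]^n}(n, n m_L, m_{L+1}))$, suitably rescaled so that all hyperplanes of its arrangement $\ca B_{m_L,n}$ meet the interior of $[0,1]^n$. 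Then $G$ is permutation invariant and lies in $\ca{H}^{\mathrm{inv}}_{[0,1]^n}(n,\ul{m})$. The composition $\wt h^{(L-1)} \circ \cdots \circ \wt h^{(1)}$ is affine on each of the $(p_1\cdots p_{L-1})^n$ grid hypercuboids $C\subset[0,1]^n$ (here $p_l=\lfloor n m_l/n\rfloor = m_l$) and maps each of them bijectively onto $[0,1]^n$; hence the linear regions of $G$ are exactly the sets $D_{C,R}=(\wt h|_C)^{-1}(R)$ for $C$ a hypercuboid and $R$ a chamber of $\ca B_{m_L,n}$, and since $\wt h|_C$ scales volumes by $\mathrm{vol}(C)^{-1}$ one has $\mathrm{vol}(D_{C,R})=\mathrm{vol}(C)\,\mathrm{vol}(R)$.

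Next I would perturb the $a^{(l)}_k$ and the weights of $F$ so that simultaneously: (i) the $\binom{m_1\cdots m_{L-1}}{n}$ hypercuboids whose lexicographically ordered index satisfies $\ol i_1<\cdots<\ol i_n$ have pairwise distinct volumes (these are exactly one representative of each $S_n$-orbit of hypercuboids with distinct index components); (ii) $F$ still attains $c^\sim(\ca{H}^{\mathrm{inv}}_{[0,1]^n}(n, n m_L, m_{L+1}))$, which by Section~\ref{sec:complexity-shallow-invariant} equals the number of $S_n$-orbits of chambers of $\ca B_{m_L,n}$ and, as in that section, forces chambers in distinct orbits to have distinct volumes; (iii) the products $\mathrm{vol}(C)\,\mathrm{vol}(R)$, over the hypercuboid-orbit representatives $C$ and the chamber-orbit representatives $R$, are pairwise distinct. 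Each of these is an open dense condition on the finitely many parameters, so they hold for a generic choice. The $\binom{m_1\cdots m_{L-1}}{n}\cdot c^\sim(\ca{H}^{\mathrm{inv}}_{[0,1]^n}(n,nm_L,m_{L+1}))$ regions $D_{C,R}$ arising from orbit representatives then have pairwise distinct volumes, hence lie in pairwise distinct $\sim$-classes, so
\[
 c^\sim(G) \ \geq\ \binom{m_1\cdots m_{L-1}}{n}\cdot c^\sim\!\big(\ca{H}^{\mathrm{inv}}_{[0,1]^n}(n, n m_L, m_{L+1})\big).
\]
Now $\binom{m_1\cdots m_{L-1}}{n}=\Omega\big((m_1\cdots m_{L-1})^n/n!\big)$, and by Theorem~\ref{thm:arrangement-group-action} together with Proposition~\ref{prop:num-of-chambers-invariant} the number of chamber-orbits of $\ca B_{m_L,n}$ is at least $g(m_L,n)/n!=\Omega(m_L^{\,n})$ with leading coefficient as in Proposition~\ref{prop:num-of-chambers-invariant}; multiplying and keeping track of the $n$-dependent constants gives $c^\sim(\ca{H}^{\mathrm{inv}}_{[0,1]^n}(n,\ul{m}))=\Omega\big((m_1\cdots m_L)^n\,n^n/n!\big)$, and this equals $\Omega\big((m_1\cdots m_L\,e)^n/\sqrt n\big)$ by Stirling's formula.

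The main obstacle is making the genericity of step (iii) rigorous while keeping (i) and (ii): one must check that the requirements ``all products $\mathrm{vol}(C)\mathrm{vol}(R)$ distinct'', ``$F$ optimal for the shallow invariant class'', and ``$G$ has the prescribed widths $\ul{m}$ and is a genuine deep set model'' are mutually compatible, i.e. that each cuts out only a nowhere dense subset of the parameter space described by \eqref{eq:equivariant-map} together with the parameters $a^{(l)}_k$. I would handle this through the explicit deletion--restriction bookkeeping already behind Theorem~\ref{thm:complexity-invariant-shallow}, which exhibits $c^\sim(\ca{H}^{\mathrm{inv}}_{[0,1]^n}(n,nm_L,m_{L+1}))$ and the chamber volumes as nonconstant algebraic functions of the weights, so the union of degenerate loci is proper and closed. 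A minor point is the bookkeeping of floors and of the lower-order terms in $\binom{m_1\cdots m_{L-1}}{n}$ and in $g(m_L,n)$, but these affect only constants and lower-order contributions, not the exponents.
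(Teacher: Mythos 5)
Your proposal is essentially the paper's own proof: you symmetrize the folding maps of Section~\ref{sec:deeper-fully-connected} so that they are permutation equivariant, compose with a shallow invariant network attaining the complexity of Section~\ref{sec:complexity-shallow-invariant}, and lower-bound $c^{\sim}$ by counting the $\binom{m_1\cdots m_{L-1}}{n}$ increasing-index hypercuboid representatives times the chamber-orbit representatives, using distinct volumes after a generic perturbation, then apply Stirling. Your explicit genericity step (iii) just spells out the paper's ``perturbing the weights or biases if we need'', and your tracking of the $n$-dependent constants is no looser than the paper's own.
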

% \[
%  \Omega((m_1\cdots m_{L-1})^n (m_L n^n/n!) = \Omega(
%  (m_1\cdots m_L e)^n /\sqrt{n}). 
% \] 
We compare this with the shallow invariant model having same number of hidden units. Then, the width of the hidden layer of the 
shallow model is equal to $n\sum_{i = 1}^L m_i$. By the argument 
in Section~\ref{sec:num-of-region-shallow-invariant}, the measure of 
complexity is $\Theta( (\sum_{i = 1}^L m_i)^n\cdot e^n /\sqrt{n})$. 
This yields that for the deep set model, deeper models can obtain 
exponentially more complexity than shallow models in our measure.

\section{Conclusion}

In this paper, we defined a new measure of complexity of ReLU neural networks, which is closer to expressive power than the number of linear regions. 
 Specifically, we considered fully connected and 
 Permutation-invariant models as examples, which are indistinguishable from the conventional measure of linear regions but have different expressive power. 
 The new complexity is introduced as the number of equivalence classes 
  that identify linear regions and linear functions on them with those transferred by a Euclidean transformation. Considering that, we have shown that the values of the measure for the two networks above are actually different. 
 In this sense, the proposed measure of complexity can be considered to represent the expressive power of the function more closely. We also proved that the value of the proposed measure increases exponentially for deeper networks by refining the model of \citet{montufar2014number} for both the fully connected model and the deep set model.

%\section{Broader Impact}
%A broader impact discussion is not applicable because this paper only considers theoretical contents.

\acknowledgments{The authors would like to thank the anonymous reviewers for their suggestions and helpful comments. 
This work was supported in part by the Grant for Basic Science Research Projects from The Sumitomo Foundation (No.200484) and the JSPS Grant-in-Aid for Scientific Research C (20K03743).}

%\bibliographystyle{amsplain}
%\bibliographystyle{plain}
%\bibliographystyle{abbrv}
%\bibliography{main.bib}
%\bibliography{/Users/yuuki/Desktop/takai.bib}
%\bibliography{/Users/takai_yuuki/Desktop/takai.bib}

\newpage
\onecolumn
\appendix
\section{Illustrations and examples}
\label{sec:appendix-illustrations-examples}

%\subsection{Preliminaries and background}

\subsection{ Fully connected shallow model}
\label{subsec:Fully-connected-shallow-example}

In this section, we illustrate linear region calculations for simple examples in the plane.

In the two-dimensional plane, "general position" means that two lines always intersect and three lines never are concurrent. Let us see an example in the case $n_0=2$, $n_1=4$, i.e., four lines in the plan.%\vspace{-6pt}
\begin{align}
\begin{cases}
	x - y + 1 = 0 & \ : \ H_{1}\\ 
 	x - y - 1 = 0 & \ : \ H_{2}\\ 
 	x + y - 2 = 0 & \ : \ H_{3}\\
   x = \frac{1}{2}& \ : \ H_{4}.
\end{cases}	
\end{align}
This arrangement is not in general position because $H_{1}$ and $H_{2}$ are parallel or $H_{1}$, $H_{3}$ and $H_{4}$ are concurrent. Its number of chambers is $9$ (Figure~\ref{fig:dense not general position})

Let us modify $H_2$ to make the arrangement being general position. Now we have:
\begin{align}
\begin{cases}
	x - y + 1 = 0 & \ : \ H_{1}\\ 
 	 y  = 1 & \ : \ H_{2}\\ 
 	x + y - 2 = 0 & \ : \ H_{3}\\
   x = \frac{1}{2} & \ : \ H_{4}.
\end{cases}	
\end{align}
Now the number of chambers is $11 = \sum_{i = 0}^{n_0} { n_1 \choose i }$. It is maximal for a 4-line arrangement in the real plane
 (Figure~\ref{fig:dense general position}). 
\begin{figure}[h]
    \begin{tabular}{cc}
      %---- 最初の図 ---------------------------
      \begin{minipage}[t]{0.45\hsize}
        \centering
        \includegraphics[keepaspectratio, scale=0.5]{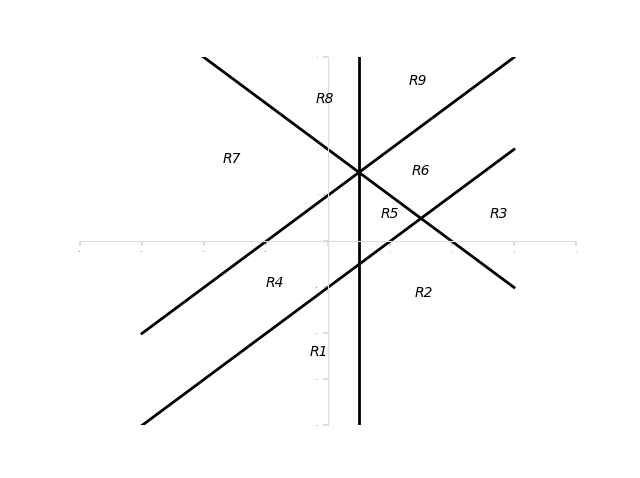}
        \caption{The line arrangement not in general position. The number of chambers is 9.}
        \label{fig:dense not general position}
      \end{minipage} &
      %---- 2番目の図 --------------------------
      \begin{minipage}[t]{0.45\hsize}
        \centering
        \includegraphics[keepaspectratio, scale=0.5]{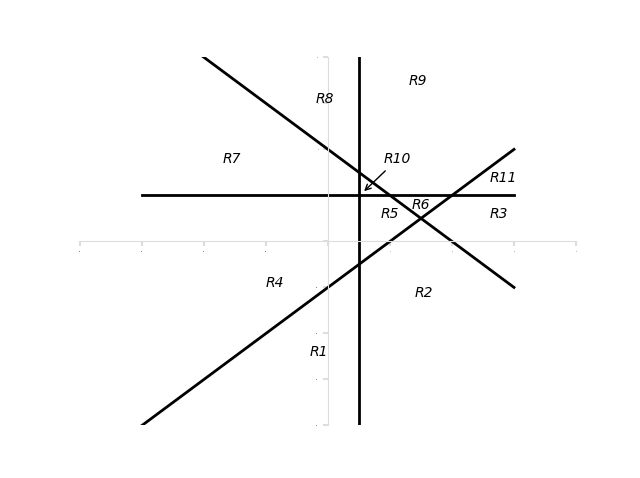}
        \caption{The line arrangement in general position. The number of chambers is 11 and is maximal}
        \label{fig:dense general position}
      \end{minipage}

    \end{tabular}
  \end{figure}

\subsection{Permutation invariant shallow model}
\label{subsec:permutation-invariant-shallow-example}

Let us consider an example of a permutation-invariant shallow model with $m=n=2$, i.e., this model also implements a function
from $\RR^2$ to $\RR^4$. We have the two pairs of lines (Figure~\ref{fig:perm_equiv_linear_regions}): %\\[-16pt]
\begin{align*}
\begin{cases}
	2x + \frac{1}{2}y - 3 = 0 & \ : \ H_{11}\\ 
 	\frac{1}{2}x + 2y - 3 = 0 & \ : \ H_{12}\\ 
 	-x + 6y  = 0 & \ : \ H_{21}\\
    6x - y = 0 & \ : \ H_{22}.
\end{cases}	
\end{align*}
\vspace{-16pt}
We also count 11 chambers.

\begin{figure}[t]
    \centering
    \includegraphics[keepaspectratio,scale=0.5]{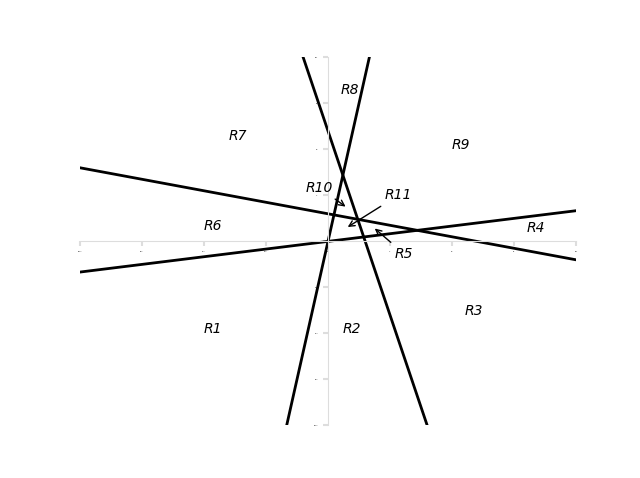}
    \caption{The 4 lines arrangement in the plane of a permutation invariant model. We count 11 linear regions. }
    \label{fig:perm_equiv_linear_regions}
\end{figure}

\subsection{Measure of complexity as the number of equivalent classes }
\label{subsec:Measure-of-complexity-example}

Let us consider again the last invariant model example: 
\begin{align}
\begin{cases}
	2x + \frac{1}{2}y - 3 = 0 & \ : \ H_{11}\\ 
 	\frac{1}{2}x + 2y - 3 = 0 & \ : \ H_{12}\\ 
 	-x + 6y  = 0 & \ : \ H_{21}\\
    6x - y = 0 & \ : \ H_{22}.
\end{cases}	
\end{align}
In this case, $S_2$ has a single element which is the permutation $\sigma = (1 \ 2)$. Here, the action of $\sigma$ on $\RR^2$ is exactly the action of the reflection symmetry through the line $x=y$. Then, the corresponding Euclidean transformation $\phi$ 
%of \ref{defn:another-action} 
is $\phi=\begin{pmatrix}
0 & 1\\
1 & 0
\end{pmatrix}$
 and the underlying group is $\hat{\Phi}=\{ I, \phi \}$

\begin{figure}[t]
    \centering
    \includegraphics[keepaspectratio,scale=0.5]{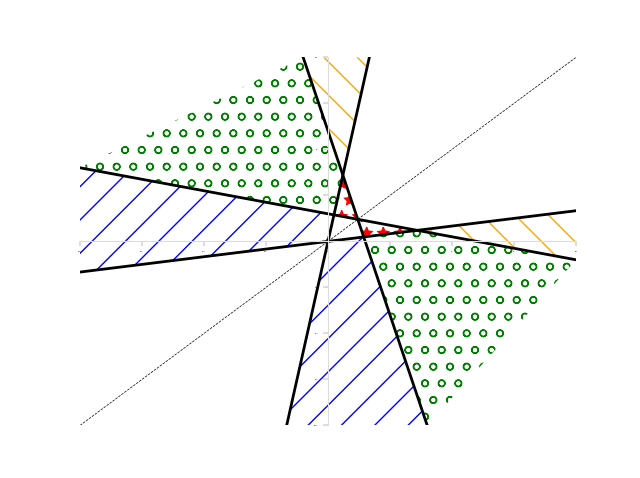}
    \caption{The dashed line is the line of equation $x=y$. We identify the equivalent regions with respect to the symmetry through the line $x=y$. The number of orbits is 7. }
    \label{fig:equivalence_classes_S2}
\end{figure}

In Figure~\ref{fig:equivalence_classes_S2}, we identify regions belonging to the same equivalent classes. In this case, a region is identified by its symmetry through the line $x=y$. Therefore, we count 7 equivalent classes of linear regions: \{R1\}, \{R2,R6\}, \{R3,R7\}, \{R4,R8\}, \{R5,R10\}, \{R9\}, \{R11\}.

\section{Proof of Proposition~\ref{prop:num-of-chambers-invariant}}
\label{sec:proof-num-of-chambers-invariant}

In this section, we prove Proposition~\ref{prop:num-of-chambers-invariant}. To show this, we use the Deletion-Restriction theorem
\cite[Theorem~2.56 and Theorem~2.68]{orlik2013arrangements}. 

\begin{thm}[Brylawsky, Zaslavsky]\label{thm:deletion-restriction}
For a hyperplane arrangement $\ca{A}$ in $\RR^n$ and a fixed hyperplane $X \in \ca{A}$, 
let $(\ca{A}, \ca{A}', \ca{A}'')$ be the triple defined as 
$\ca{A}' = \ca{A}{\setminus}\{X \}$ and 
\[
 \ca{A}'' = \{ H \cap X \mid H \in \ca{A}{\setminus}\{X \}, H\cap X \neq \emptyset \}. 
\]
Then, the following holds: 
\[
 |\Ch(\ca{A})| = |\Ch(\ca{A}')|+|\Ch(\ca{A}'')|. 
\]
\end{thm}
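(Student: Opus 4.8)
The plan is to prove the formula by a direct geometric count of chambers, adding the hyperplane $X$ back to the deleted arrangement $\ca{A}' = \ca{A}{\setminus}\{X\}$ and tracking what happens to each chamber. Every chamber $C$ of $\ca{A}'$ is an open convex subset of $\RR^n$, and exactly one of two things occurs: either $C$ is disjoint from $X$, so $C$ lies in one of the two open half-spaces cut out by $X$ and remains a single chamber of $\ca{A}$; or $C$ meets $X$, and then, since $C$ is open and convex, $C{\setminus}X$ is the disjoint union of the two nonempty open convex sets $C\cap X^{+}$ and $C\cap X^{-}$, each of which avoids every hyperplane of $\ca{A}$ and is therefore a chamber of $\ca{A}$. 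Letting $k$ denote the number of chambers of $\ca{A}'$ that meet $X$, this bookkeeping already gives $|\Ch(\ca{A})| = (|\Ch(\ca{A}')| - k) + 2k = |\Ch(\ca{A}')| + k$, so the theorem reduces to proving $k = |\Ch(\ca{A}'')|$.

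The core of the argument, and the step I expect to require the most care, is to establish a bijection between the chambers of $\ca{A}'$ that meet $X$ and the chambers of $\ca{A}''$, where $\ca{A}''$ is regarded as an arrangement inside $X \cong \RR^{n-1}$. I would define the map $C \mapsto C \cap X$ and verify it does the job. The essential observation is that a point $p \in X$ lies outside $\bigcup_{H \in \ca{A}'}(H\cap X)$ if and only if it lies outside $\bigcup_{H \in \ca{A}'} H$; hence such a $p$ determines both a unique chamber $C$ of $\ca{A}'$ (which then automatically meets $X$, as $p$ witnesses) and a unique chamber of $\ca{A}''$. A connectedness argument finishes it: any path lying in $X$ and avoiding every trace $H\cap X$ also avoids every $H$, so it cannot leave the chamber of $\ca{A}'$ it starts in; this shows simultaneously that $C\cap X$ is a single connected component of $X{\setminus}\bigcup_{H}(H\cap X)$, i.e. a chamber of $\ca{A}''$, that distinct such $C$ give distinct chambers of $\ca{A}''$, and that every chamber of $\ca{A}''$ is hit. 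Thus $k = |\Ch(\ca{A}'')|$ and the identity follows.

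A handful of degenerate cases should be checked, but all are routine: when $\ca{A}' = \emptyset$ (so $\ca{A} = \{X\}$) one has $|\Ch(\ca{A}')| = |\Ch(\ca{A}'')| = 1$ and $|\Ch(\ca{A})| = 2$; when every hyperplane of $\ca{A}'$ is parallel to $X$, the hyperplane $X$ subdivides exactly one slab, consistent with $|\Ch(\ca{A}'')| = 1$; and one should note that two distinct $H_1, H_2 \in \ca{A}'$ may have the same trace $H_1\cap X = H_2\cap X$, so $\ca{A}''$ is taken as a genuine set, which is harmless because chambers only see the union of the traces. Alternatively, one could deduce the chamber identity from Zaslavsky's formula $|\Ch(\ca{A})| = (-1)^n \chi_{\ca{A}}(-1)$ together with the deletion--restriction recursion $\chi_{\ca{A}} = \chi_{\ca{A}'} - \chi_{\ca{A}''}$ for characteristic polynomials, but the direct count above is self-contained. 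The only real obstacle is the topological verification that ``being in the same chamber of $\ca{A}'$ after intersecting with $X$'' is the same relation as ``being in the same chamber of $\ca{A}''$''; once that is in hand, the proof is complete.
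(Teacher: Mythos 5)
Your proof is correct. Note that the paper itself does not prove this statement at all: it is quoted as a classical result (Brylawsky, Zaslavsky) with a citation to Orlik and Terao, where the chamber identity is obtained by combining the deletion--restriction recursion for the Poincar\'e (equivalently, characteristic) polynomial with Zaslavsky's theorem $|\Ch(\ca{A})| = (-1)^n\chi_{\ca{A}}(-1)$ --- essentially the ``alternative'' route you mention in passing. Your argument is instead a direct, self-contained geometric count, and it goes through: splitting the chambers of $\ca{A}'$ according to whether they meet $X$ gives $|\Ch(\ca{A})| = |\Ch(\ca{A}')| + k$ (openness and convexity of chambers guarantee that a chamber meeting $X$ splits into exactly two), and the map $C \mapsto C\cap X$ is a bijection from chambers of $\ca{A}'$ meeting $X$ to chambers of $\ca{A}''$, giving $k = |\Ch(\ca{A}'')|$. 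The points that carry the argument --- that a point of $X$ avoids all traces $H\cap X$ if and only if it avoids all $H\in\ca{A}'$, that $C\cap X$ is convex hence connected, and that a path in $X$ avoiding the traces cannot leave the chamber of $\ca{A}'$ it starts in, so $C\cap X$ is a full component of the complement in $X$ --- are all correctly identified, and you rightly flag the harmless degeneracies (hyperplanes parallel to $X$, distinct hyperplanes with equal traces, $\ca{A}'=\emptyset$). What your elementary route buys is independence from the intersection-lattice and M\"obius-function machinery, making the appendix self-contained; what the cited route buys is that the same recursion holds for finer invariants than the chamber count, which is why Orlik--Terao state it at the level of polynomials.
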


By apply Theorem~\ref{thm:deletion-restriction} to our hyperplane arrangement, we obtain a recurrence relation and calculate the 
number of linear regions for permutation invariant models. 

\begin{proof}[Proof of Proposition~\ref{prop:num-of-chambers-invariant}] %\ytnote{revise correctly.} 
% We show that $b_{m,n} = \Omega(m^nn^n/n!)$, then by Stirling's formula, we can show the conclusion. 

Let 
$\ca{B}_{m,n} = \{ H_{ij} \subset \RR^n \mid i =1, \dots, m, \ j = 1, \dots, n \}$ 
be the hyperplane arrangement defined by 
\eqref{eq:invariant-equations}. 
We recall that hyperplanes of this arrangement $\ca{B}_{m,n}$ 
satisfy the following equations: 
\begin{align}
&H_{i_1,j}\cap H_{i_2,j}\cap H_{i_3,j} = \emptyset, \label{eq:condition-intersection-1-appendix}\\ 
&H_{i_1,j_1}\cap H_{i_1,j_2} \cap H_{i_2,j_1} = 
 H_{i_1,j_1}\cap H_{i_1,j_2} \cap H_{i_2,j_2} = 
 H_{i_1,j_1}\cap H_{i_2,j_1} \cap H_{i_2,j_2} 
 \label{eq:condition-intersection-2-appendix}
\end{align}
for $i_1, i_2, i_3 = 1, \dots, m$ and $j, j_1, j_2 = 1, \dots, n$.

We apply Theorem~\ref{thm:deletion-restriction} to $\ca{B}_{m,n}$ and 
$H_{m,n} \in \ca{B}_{m,n}$. Then, we have 
\begin{align*}
 \ca{B}_{m,n}' &= 
%\{ H_{ij} \subset \RR^n \mid i = 1, \dots, m, j = 1, \dots, n-1 \}, \\ 
\{ H_{11}, \dots, H_{1n}, \dots, H_{m1}, \dots, H_{m,n-1} \}, \\
 \ca{B}_{m,n}'' &= 
% \{ H_{ij}\cap H_{m,n} \mid i = 1, \dots, m, j = 1, \dots, n-1 \}
 \{ H_{11}\cap H_{m,n}, \dots, H_{1n}\cap H_{m,n}, \dots, 
 H_{m1}\cap H_{m,n}, \dots, H_{m,n-1}\cap H_{m,n} \}  
\end{align*}
and $|\ca{B}_{m,n}| = |\ca{B}_{m,n}'| +|\ca{B}_{m,n}''|$
Here, because $H_{m,n}$ is a hyperplane bijective to $\RR^{n-1}$, 
 $H_{ij}\cap H_{m,n}$ can be regarded as a hyperplane in 
 $H_{m,n} = \RR^{n-1}$. 

Next, we consider deletion and restriction for  
$\ca{B}_{m.n}''$ and $H_{m-1,n}\cap H_{m,n}$. Then, we have 
\begin{align*}
 (\ca{B}_{m,n}'')' &= 
 \left\{ H_{1,1}\cap H_{m,n}, \dots, H_{m-2,n}\cap H_{m,n}, 
  H_{m-1, 1}\cap H_{m,n}, \dots, H_{m-1,n-1}\cap H_{m,n}, \atop 
 H_{m,1}\cap H_{m,n}, \dots, H_{m,n-1}\cap H_{m,n} \right\}, \\ 
 (\ca{B}_{m,n}'')'' &= \left\{ 
 H_{1,1}\cap H_{m-1,n}\cap H_{m,n}, \dots, 
 H_{1,n}\cap H_{m-1,n}\cap H_{m,n}, \dots, \atop 
% H_{m-1,1}\cap H_{m-1,n}\cap H_{m,n}, \dots, \atop
% H_{m-1,n-1}\cap H_{m-1,n}\cap H_{m,n},  
 H_{m,1}\cap H_{m-1,n}\cap H_{m,n}, \dots, H_{m,n-1}\cap H_{m-1,n}\cap H_{m,n} \right\}. 
\end{align*}
Then, in the above $(\ca{B}_{m,n}'')''$, by the relation 
 \eqref{eq:condition-intersection-1-appendix}, we have  
\[
 H_{i,n}\cap H_{m-1, n}\cap H_{m,n} = \emptyset
\]
for any $i = 1, \dots, m-2$. Hence, any hyperplane of the form 
$H_{i,n}\cap H_{m-1, n}\cap H_{m,n}$ vanishes from $(\ca{B}_{m,n}'')''$. 
Moreover, by the relation 
\eqref{eq:condition-intersection-2-appendix}, 
for any $j = 1, \dots, n-1$, 
\[
 H_{m,j} \cap H_{m-1, n}\cap H_{m,n} = H_{m-1, j}\cap 
 H_{m-1,n}\cap H_{m,n}
\]
holds. By this relation, we can unify the hyperplanes of forms of  
$H_{m,j} \cap H_{m-1, n}\cap H_{m,n}$ and 
$H_{m-1, j}\cap H_{m-1,n}\cap H_{m,n}$. 
By these arguments, $(\ca{B}_{m,n}'')''$ can be written by 
\[
  (\ca{B}_{m,n}'')'' = 
  \{ H_{i,j}\cap H_{m-1,n}\cap H_{m,n} \subset \RR^{n-2} \mid 
  i =1, \dots, m-1, \ j = 1, \dots, n-1 \}. 
\]
Once, we set 
$\ol{H}_{i,j} = H_{i,j}\cap H_{m-1,n}\cap H_{m,n} \in (\ca{B}_{m,n}'')''$. Then, it is easy to show that 
the obtained arrangement 
$(\ca{B}'')'' = \{\ol{H}_{i,j} \subset \RR^{n-2} 
\mid i = 1, \dots, m-1, j = 1, \dots, n-1 \}$ satisfies the following  
relations: 
\begin{align*}
&\ol{H}_{i_1,j}\cap \ol{H}_{i_2,j}\cap \ol{H}_{i_3,j} = \emptyset, \\
%\label{eq:condition-intersection-1-appendix}\\ 
&\ol{H}_{i_1,j_1}\cap \ol{H}_{i_1,j_2} \cap \ol{H}_{i_2,j_1} = 
 \ol{H}_{i_1,j_1}\cap \ol{H}_{i_1,j_2} \cap \ol{H}_{i_2,j_2} = 
 \ol{H}_{i_1,j_1}\cap \ol{H}_{i_2,j_1} \cap \ol{H}_{i_2,j_2}. 
% \label{eq:condition-intersection-2-appendix}
\end{align*}
for $i_1, i_2, i_3 = 1, \dots, m-1$ and $j, j_1, j_2 = 1, \dots, n-1$.
This means that the hyperplane arrangement $(\ca{B}'')''$ can be 
regarded as an arrangement ``$\ca{B}_{m-1,n-1}$ in $\RR^{n-2}$''. 
We will subsequently justify this argument more precisely. 

Before we do it, we shall observe the deletion and restriction for 
$\ca{B}_{m,n}'$ with $H_{m-1, n} \in \ca{B}_{m,n}'$. 
Then, %by Theorem~\ref{thm:deletion-restriction}, 
we have the following arrangements: 
\begin{align*}
 (\ca{B}_{m,n}')' &= 
%\{ H_{ij} \subset \RR^n \mid i = 1, \dots, m, j = 1, \dots, n-1 \}, \\ 
\{ H_{1,1}, \dots, H_{m-2,n}, H_{m-1, 1}, \dots, H_{m-1,n-1}, 
\dots, H_{m,1}, \dots, H_{m,n-1} \}, \\
 (\ca{B}_{m,n}')'' &= 
% \{ H_{ij}\cap H_{m,n} \mid i = 1, \dots, m, j = 1, \dots, n-1 \}
 \left\{ H_{11}\cap H_{m-1,n}, \dots, H_{1,n}\cap H_{m-1,n}, \dots,  
 H_{m-1,1}\cap H_{m-1,n}, \dots, \atop H_{m-1,n-1}\cap H_{m-1,n}, 
  H_{m,1}\cap H_{m-1,n}, \dots, H_{m,n}\cap H_{m-1,n} \right\}.  
\end{align*}
Then, we remark that $(\ca{B}_{m,n}')''$ is same as 
$(\ca{B}_{m,n}'')'$ if we exchange $H_{m-1,j}$ and $H_{m,j}$. 
By these relations, we have the following diagram: 
\[
  \xymatrix@C=50pt@R=20pt{
    \ca{B}_{m,n} \ar[r]^{\cap H_{m,n}} \ar[d]^{{\setminus}H_{m,n}} & \ca{B}_{m,n}'' \ar[r]^{\hspace{-15pt}\cap (H_{m-1,n}\cap H_{m,n})} 
    \ar[d]^{{\setminus}(H_{m-1,n}\cap H_{m,n})}  &  (\ca{B}_{m,n}'')''\\
    \ca{B}_{m,n}' \ar[r]^{\hspace{-40pt}\cap H_{m-1,n}} 
    \ar[d]^{{\setminus}H_{m-1,n}} & 
    (\ca{B}_{m,n}')''\text{ ``}=\text{'' } (\ca{B}_{m,n}'')' & \\ 
     (\ca{B}_{m,n}')' 
  }
\]
To extract a recurrence relation from this diagram, we introduce 
another notation: Let 
\[ 
 \ca{B}^\ell_{m,n} = \{ X_{i,j}\subset \RR^\ell \mid i = 1, \dots, m, j = 1, \dots, n  \}
\] be a hyperplane arrangement in $\RR^\ell$ satisfying the following 
relations: 
\begin{align}
&X_{i_1,j}\cap X_{i_2,j}\cap X_{i_3,j} = \emptyset, \label{eq:condition-intersection-1-appendix}\\ 
&X_{i_1,j_1}\cap X_{i_1,j_2} \cap X_{i_2,j_1} = 
 X_{i_1,j_1}\cap X_{i_1,j_2} \cap X_{i_2,j_2} = 
 X_{i_1,j_1}\cap X_{i_2,j_1} \cap X_{i_2,j_2} 
 \label{eq:condition-intersection-2-appendix}
\end{align}
for $i_1, i_2, i_3 = 1, \dots, m$ and $j, j_1, j_2 = 1, \dots, n$. 
Then, by the above arguments and a simple consideration, 
we have the following diagram: 
\begin{gather}
\begin{aligned}
  \xymatrix@C=70pt@R=20pt{
    \ca{B}_{m,n}^n \ar[r]^{\cap X_{m,n}} \ar[d]^{{\setminus}X_{m,n}} & (\ca{B}_{m,n}^{n})'' \ar[r]^{\cap (X_{m-1,n}\cap X_{m,n})}
    \ar[d]^{{\setminus} (X_{m-1,n}\cap X_{m,n})}  & 
    \ca{B}_{m-1,n-1}^{n-2}\\
    (\ca{B}_{m,n}^n)' \ar[r]^{\cap X_{m-1,n}} \ar[d]^{{\setminus}X_{m-1,n}} & ((\ca{B}_{m,n}^n)')'' \ar[r]^{\cap (X_{m-2,n}\cap X_{m-1,n})}
    \ar[d]^{{\setminus} (X_{m-2,n}\cap X_{m-1,n})}
     & \ca{B}_{m-1,n-1}^{n-2} \\ 
     \vdots \ar[d]^{{\setminus}X_{2,n}} & \vdots 
     \ar[d]^{{\setminus} (X_{1,n}\cap X_{2,n})} \\  
     \ca{B} \ar[r]^{\cap X_{1,n}} \ar[d]^{{\setminus}X_{1,n}} & \ca{B}_{m,n-1}^{n-1} \\ 
     \ca{B}_{m,n-1}^{n}
  } 
\end{aligned}
\label{eq:commutative-diagram-deletion-restriction}
\end{gather}
Here, $\ca{B}$ is the hyperplane arrangement in $\RR^n$ 
defined by 
\[
 \ca{B} = \ca{B}_{m,n-1}^n \cup \{ X_{1, n} \}. 
\]
Let $b_{m,n}^\ell = |\Ch(\ca{B}_{m,n}^\ell)|$. Then, 
by Theorem~\ref{thm:deletion-restriction} with the 
diagram~\eqref{eq:commutative-diagram-deletion-restriction}, 
we have the recurrence relation 
\begin{align*}
 b_{m,n}^n = b_{m,n-1}^n + m b_{m,n-1}^{n-1} + 
 \frac{m(m-1)}{2}b_{m-1,n-1}^{n-2}. %\label{eq:recurrence-relation} 
\end{align*}
%We set $b^{\ell}_{m,n}$ as the number of chambers of 
%$\{ H_{ij} \subset \RR^\ell \mid i = 1,\dots, m, \ j = 1, \dots, n \}$. We remark that $b^n_{m,n}$ is equal to $b_{m,n}$. 
%Then, by a tedious calculation by using Theorem~\ref{thm:deletion-restriction}, we can show that 
%$b^\ell_{m,n}$ satisfies the following 
%recurrence relation: 
Moreover, by considering recursively, we can show that 
the following holds for $\ell, m,n \geq 1$: 
\begin{align}
 b_{m,n}^\ell = b_{m,n-1}^{\ell} + m b_{m,n-1}^{\ell-1} + 
 \frac{m(m-1)}{2}b_{m-1,n-1}^{\ell-2}. \label{eq:recurrence-relation} 
\end{align} 
Here, $b^0_{m,n} = b_{0,n}^\ell = b_{m,0}^\ell = 1$ for any 
$\ell,m,n\geq 0$ and we set $b_{m,n}^\ell = 0$ for $\ell < 0$.  
Then, for example, by \eqref{eq:recurrence-relation}, we have 
 $b^1_{m,n} = mn + 1$ for any $m,n\geq 0$,  
$b^\ell_{m,1} = m^2/2 + m/2 + 1$ for any $\ell\geq 2$ and $m$. 
In particular, $b_{m,n}^\ell$ is a polynomial with respect to $m$. 
  
By this recurrence relation \eqref{eq:recurrence-relation}, we can represent $b_{m,n}^n$ as 
%a sum of $b_{m-k,0}^{\ell'}$ with coefficient of a polynomial of $m$. 
\[
 b_{m,n}^n = \sum_{k=0}^{n/2} \sum_{\ell=0}^{n} 
 d_{\ell, k}(m) b_{m-k,0}^{n-2k-\ell} = \sum_{k=0}^{n/2} \sum_{\ell=0}^{n} 
 d_{\ell, k}(m), 
\]
where $d_{\ell,k}(m)$ is a non-negative integer. Here, the last equation follows from $b_{m-k,0}^{n-2k-\ell} = 1$ for any 
$k, \ell, m$ such that $m-k\geq 0$ and $n-2k-\ell \geq 0$.    
Then, it is easy to show that $d_{\ell, k}(m)$ is obtained 
as a sum of multiples of $k$ times 
``$m(m-1)/2$'', $\ell$ times ``$m$'', and $n-k-\ell$ times $1$. 
Here, these double quotation means that these vary in accordance with  
 the order of the operations. Indeed, the iteration relation~\eqref{eq:recurrence-relation} can be represented as 
 a higher-dimensional analogue of Pascal's triangle as Figure~\ref{fig:Pascal's-triangle}. 
\begin{figure}[h]
\centering
    \includegraphics[keepaspectratio,scale=0.5]{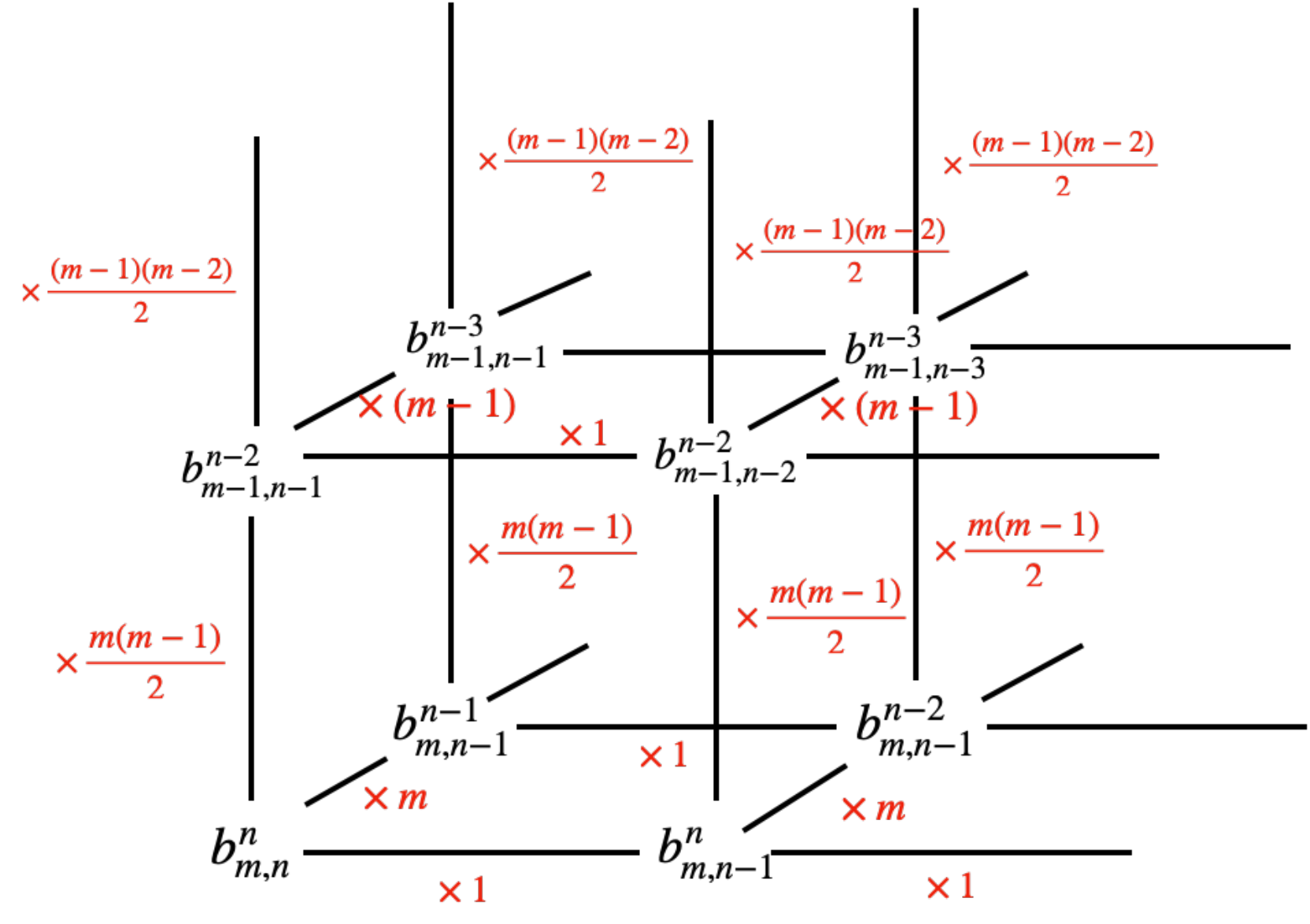}
    \caption{A higher dimensional analogue of Pascal's triangle representing the iteration relation~\eqref{eq:recurrence-relation}.}
    \label{fig:Pascal's-triangle}
\end{figure}
However, because we will calculate only the coefficient of leading term of $b_{m,n}^n$ as a polynomial of variable $m$, we may not 
take care of the orders.  
Then, the degree of $d_{\ell, k}(m)$ as a polynomial of variable $m$ is equal to $2k + \ell$. 
This means that the leading term of 
$b_{m,n}^n$ as a polynomial of variable $m$ is equal to the 
sum of terms $d_{\ell, k}(m)$ for $2k+\ell = n$. 
%Now, we calculate the right hand side of the following inequality: 
Moreover, by the fact $d_{\ell, k}(m)\geq 0$, we have 
\begin{align}
 b_{m,n}^n = \sum_{k=0}^{n/2} \sum_{\ell=0}^{n} 
 d_{\ell, k}(m)  
 \geq \sum_{k=0}^{n/2} d_{n-2k, k}(m) = 
 \text{(the leading term of }b_{m,n}^n \text{ as a polynomial of variable } m \text{)}.  
 \label{eq:lower-bound-by-recurrence}
\end{align}
%Here, the inequality follows from the fact $d_{\ell, k}(m) \geq 0$. 
% and the last inequality follows from $b_{m-k,0}^0 = 1$ for any $m$ 
%and $k$. 
We calculate a lower bound of the leading term. 
Then, the leading term of $d_{n-2k, k}(m)$ as a polynomial of $m$ can be written as   
\begin{align}
 d_{n-2k,k}(m) = {n \choose k, k, n-2k} \frac{1}{2^{k}} 
 m^n + O(m^{n-1}), \label{eq:first-lower-bound-of-d}
\end{align}
where ${n \choose k_1, \dots, k_m}$ for positive integers 
$k_1, \dots, k_m$ such that $n = k_1 + \cdots + k_m$ 
is the multinomial coefficient defined by 
\begin{align}
 {n \choose k_1, \dots, k_m} = \frac{n!}{k_1! \cdots k_m!} = 
 {k_1 \choose k_1}{k_1+k_2 \choose k_2}\cdots {k_1+k_2+\cdots + k_m \choose k_m}. \label{eq:multinomial-coefficient} 
\end{align}
Indeed, as mentioned before, $d_{n-2k,k}(m)$ is obtained as a sum of multiples of $k$ times of ``$m(m-1)/2$'', $n-2k$ times of ``$m$'', and $k$ times of $1$. Although the terms in the double quotations varies in accordance with the orders of the operations, the leading term 
is independent of the orders. Hence, the leading term of 
$d_{n-2k,k}(m)$ 
is the sum of multiples of $k$ times of $1/2$, $n-2k$ times of $1$, 
 and $k$ times of $1$. The number of such multiples in the sum is same 
 as ${n \choose k, k, n-2k}$. Hence, we have 
\[
 d_{n-2k, k}(m) = {n \choose k, k, n-2k} \frac{1}{2^{k}} 
 m^n + O(m^{n-1}).  
\]

By the form of RHS of equation~\eqref{eq:multinomial-coefficient} 
and the estimate in \eqref{eq:estimate-linear-region-fully-connected}, 
we have 
\begin{align}
 {n \choose k, k, n- 2k} &= {k \choose k}
 {2k \choose k}{n \choose n- 2k} = 
 {2k \choose k}{n \choose n- 2k}\notag \\
 &\geq \frac{2^{k H(1/2)}}{\sqrt{8k(1-1/2)}}
 \frac{2^{nH((n-2k)/n)}}{\sqrt{8k(n-2k)(1-(n-2k)/n)}} \notag \\ 
 &= \frac{2^{2k}2^{nH((n-2k)/n)}}{8k\sqrt{(n-2k)/n}}. 
 \label{eq:lower-bound-of-multinomial-coeff}
\end{align}
In the last inequality follows from $H(1/2) = 1$. 
%Now, $nH(2k/n)$ is equal to 
%$-2k \log_2 (2k/n) - (n-2k) \log_2 (1 -2k/n)$. 

We evaluate the coefficient of the leading term at $k= n/4$. Then,  
we have
\[
 d_{n/2, n/4}(m) \geq \frac{(2^{5/4})^n}{n\sqrt{2}} 
 m^n + O(m^{n-1}). 
\]
In particular, the coefficient of leading term of $b_{m,n}^n$ is 
bounded from below by $(2^{5/4})^n/(n\sqrt{2})$. 
This concludes the proof. \qedhere

%Thus, we 
%have 
%\[
% 2^{nH(2k/n)} = \left( \frac{n}{2k} \right)^{2k} 
% \left(1- \frac{2k}{n} \right)^{-(n-2k)} 
% = \left(1- \frac{2k}{n} \right)^{-n} 
% \left(\frac{n}{2k} - 1 \right)^{2k}
% \geq e^{2k} \left(\frac{n}{2k} - 1 \right)^{2k}. 
%\]
%Here, the last inequality follows from the inequality 
%\[
% \left(1- \frac{2k}{n} \right)^{-n} \geq e^{2k}. 
%\]
%Moreover, if $n/2k -1\geq 2$, then we have 
%\begin{align}
% e^{2k} \left(\frac{n}{2k} - 1 \right)^{2k} \geq e^{2k} 2^{2k}. 
% \label{eq:inequation-(n/2k-1)}
%\end{align}
%Hence, by \eqref{eq:first-lower-bound-of-d}, \eqref{eq:lower-bound-of-multinomial-coeff} and \eqref{eq:inequation-(n/2k-1)}, 
%for $n/2k -1 \geq 2$, $d_{n-2k,k}$ is bounded from below by 
%\[
% d_{n-2k,k} \geq 
% \frac{2^{n/2} 2^{k} e^{2k}}{4\sqrt{2kn}(1-2k/n)} \geq 
% \frac{2^{n/2} 2^{k} e^{2k}}{4n}. 
%\]
%In the last inequality, we use $4\sqrt{2kn}(1-2k/n) \leq 4n$ by 
%$k\leq n/2$ and $1 - 2k/n \leq 1$. 
%We return to the estimate of $b_{m,n}^n$. By the inequality 
%\eqref{eq:lower-bound-by-recurrence}, we have 
%\begin{align}
% b_{m,n}^n &= \sum_{k=0}^{n/2} \sum_{\ell=0}^{n} 
% d_{\ell, k}(m) b_{m-k,0}^{n-2k-\ell} 
% \geq \sum_{k=0}^{n/2} d_{n-2k, k} b_{m-k,0}^0
% \geq \sum_{k=0}^{n/6} \frac{2^{n/2} 2^{k} e^{2k}}{4n}(m-n/6)^n \\ 
% &= \frac{2^{n/2}}{4n}\frac{(2e^2)^{n/6+1} - 1}{2e^2 - 1}(m-n/6)^n
% = \frac{(2^{2/3}e^{1/3})^n - 2^{n/2}}{4n(2e^2-1)}(m-n/6)^n\\ 
% & \geq \frac{(2^{2/3}e^{1/3})^n - 2^{n/2}}{8n e^2}(m-n/6)^n. 
%% =  \frac{e^{n-2}(1-e^{-n/2+2})}{32n}.
%\end{align}
%This concludes the proof. 
\end{proof}

\section{Proof of Proposition~\ref{prop:characterization-group-invariant}}
\label{sec:proof-characterization-group-invariant}

\begin{proof}[Proof of Proposition~\ref{prop:characterization-group-invariant}]
%Let $\bm{x}\in \RR^n$. 
Let $\lambda\in \Lambda$, $\bm{x}\in D_\lambda$ and $\phi \in \Phi$. We assume that $\phi$ satisfies 
(1) $\phi(D_\lambda) = D_{\lambda'}$ and 
(2) $f_\lambda = f_\lambda\circ \phi|_{D_\lambda}$. 
Then, we have 
\begin{align}
 f(\phi(\bm{x})) &= f_{\lambda'}(\phi(\bm{x})) = 
 (f_{\lambda'}\circ \phi|_{D_\lambda}) (\bm{x}) \\ &= 
 f_\lambda(\bm{x}) = f(\bm{x}). \label{eq:invariance-of-phi}
\end{align}
This equation holds for any $\bm{x}$ and any $\phi \in \Phi$. 
Because $\phi \in \Phi$ is a Euclidean transformation, $\phi$ is an isomorphism. In particular, the inverse of $\phi$ exists. 
As for any $\bm{y} \in \RR^n$, there is a $\bm{x}$ such that 
$\bm{y} = \phi(\bm{x})$, by the equation~\eqref{eq:invariance-of-phi}, 
we have 
\begin{align}
 f(\phi^{-1}(\bm{y})) = f(\bm{x}) = f(\phi(\bm{x})) = f(\bm{y}). 
 \label{eq:invariance-of-phi^{-1}}
\end{align}
Hence, $f$ is invariant by the action of $\phi^{-1}$ for any 
$\phi\in \Phi$.  
Now, let $\wh{\Phi}$ be the subgroup of the group of Euclidean transformations generated by $\Phi$. This means that any element 
$\phi \in \wh{\Phi}$ is a composition of finite elements of 
$\{\phi_1, \dots, \phi_t, \phi_1^{-1}, \dots, \phi_t^{-1} \}$. 
Hence, by combining this fact and equations \eqref{eq:invariance-of-phi} and \eqref{eq:invariance-of-phi^{-1}}, 
$f$ is invariant by the action of the group $\wh{\Phi}$. 	
\end{proof}

\end{document}